\begin{document}

\title{\huge{Communication-Efficient Policy Gradient Methods for Distributed Reinforcement Learning}}
\author{
Tianyi Chen, Kaiqing Zhang, Georgios B. Giannakis, and Tamer Ba\c{s}ar
\thanks {Work in this paper was supported by NSF 1509040, 1508993, and 1711471, and US ARL W911NF-17-2-0196. This work was also supported by the Rensselaer-IBM AI Research Collaboration (http://airc.rpi.edu), part of the IBM AI Horizons Network (http://ibm.biz/AIHorizons).}
\thanks{T. Chen is with the Department of Electrical, Computer, and Systems Engineering,
Rensselaer Polytechnic Institute, Troy, NY, USA. Email: chent18@rpi.edu. 
K. Zhang and T. Ba\c{s}ar are with the Department of Electrical and Computer Engineering, University of Illinois at Urbana-Champaign, Urbana, IL 61801, USA. Emails: \{kzhang66,basar1\}@illinois.edu. G. B. Giannakis are with the Department of Electrical and Computer Engineering and the Digital Technology Center, University of Minnesota, Minneapolis, MN 55455 USA. Email: georgios@umn.edu.
}
}
\markboth{ }{}
\maketitle

\begin{abstract}
This paper deals with distributed policy optimization in reinforcement learning, which involves a central controller and a group of learners. In particular, two typical settings encountered in several applications are considered: \emph{multi-agent reinforcement learning} (RL) and \emph{parallel RL}, where frequent information exchanges between the learners and the controller are required. For many practical distributed systems, however, the overhead caused by these frequent communication exchanges is considerable, and becomes the bottleneck of the overall performance. To address this challenge, a novel policy gradient approach is developed for solving distributed RL. The novel approach adaptively skips the policy gradient communication during iterations, and can reduce the communication overhead without degrading learning performance. It is established analytically that:  i) the novel algorithm has convergence rate identical to that of the plain-vanilla policy gradient; while ii) if the distributed learners are heterogeneous in terms of their reward functions, the number of communication rounds needed to achieve a desirable learning accuracy is markedly reduced. Numerical experiments corroborate the communication reduction attained by the novel algorithm compared to alternatives. 
\end{abstract}

\begin{IEEEkeywords}
Reinforcement learning, distributed learning, communication-efficient learning, multi-agent, policy gradient.
\end{IEEEkeywords}

\section{Introduction}
Reinforcement learning (RL) involves a sequential decision-making procedure, where a learner takes (possibly randomized) actions in a stochastic environment over a sequence of time steps, and aims to maximize the long-term cumulative  rewards received from the interacting environment. Generally modeled as a Markov decision process (MDP) \cite{sutton2018}, the sequential decision-making process has been tackled by various RL algorithms, including Q-learning \cite{watkins1992q}, policy gradient (PG)  \cite{sutton2000}, and actor-critic methods  \cite{konda2000actor}. While these popular RL algorithms were originally developed for the \emph{single-learner} task, a number of practical RL tasks such as autonomous driving \cite{shalev2016safe}, robotics \cite{stone2000multiagent},  and video games \cite{tampuu2017multiagent}, involve \emph{multiple learners} operating in a distributed fashion. In this paper, we consider the distributed reinforcement learning problem that covers two general RL settings: \emph{multi-agent collaborative} RL and \emph{parallel} RL. The distributed RL settings we consider include a central controller that coordinates the learning processes of all learners. The learners can be agents in the multi-agent collaborative RL, or, workers in the parallel RL. In the former setting, multiple agents aim to maximize the team-averaged long-term reward via collaboration in a common environment \cite{kar2013cal,wai2018nips,zhang18icml}; while in the latter, multiple parallel machines are used for solving large-scale MDPs with larger computational power and higher data efficiency  \cite{nair2015massively,mnih2016icml}. Similar learning paradigms have also been investigated in distributed \emph{supervised learning} \cite{recht2011,li2014scaling}, e.g., {Federated Learning} \cite{mcmahan2017blog,mcmahan2017}. 

To coordinate the distributed learners, the central controller must exchange information with all learners, by collecting their rewards and local observations, or, broadcasting the RL policy to them. This type of information exchange requires frequent communication between the controller and the learners. However, in many applications, including cloud-edge AI systems \cite{stoica2017}, autonomous driving \cite{shalev2016safe}, and other applications in IoT \cite{chen2019iot}, the communication is costly and the latency caused by frequent communication becomes the bottleneck of the overall performance. These considerations motivate well the development of communication-efficient approaches for latency-sensitive distributed RL tasks. Although there has been a surging interest in studying communication-efficient approaches for supervised learning \cite{alistarh2017nips,jordan2018,chen2018nips}, no prior work has particularly focused on the RL setting. In this context, our goal is to develop a simple yet general algorithm for solving distributed RL problems, with provable convergence guarantees and reduced communication overhead.  

\subsection{Our contributions}
Targeting a communication-efficient solver for distributed RL, we propose a new PG method that we term Lazily Aggregated Policy Gradient (LAPG). With judiciously designed communication trigger rules, LAPG is shown capable of: i) achieving the same order of convergence rate
(thus iteration complexity) as vanilla PG under standard conditions; and, ii) reducing the communication rounds required to achieve a  desirable learning accuracy, when
the distributed learners are heterogeneous (meaning reward functions and initial states are not homogeneous). In certain learning settings, we show that LAPG requires only ${\cal O}(1/M)$ communication of PG with $M$ denoting the number of learners. 
Empirically, we evaluate the performance of LAPG using neural network-parameterized policies on a  popular multi-agent collaborative RL benchmark, the cooperative navigation task, and corroborate that LAPG can considerably reduce the communication required by PG.

\subsection{Related work}
\textbf{PG methods.}
PG methods have been recognized as one of the most pervasive RL algorithms \cite{sutton2018}, especially for RL tasks with large and possibly continuous state-action spaces. By parameterizing the infinite-dimensional policy with finite-dimensional vectors \cite{sutton2018}, PG methods reduce the search for the optimal policy over functional spaces to that over parameter spaces. Early PG methods include the well-known REINFORCE algorithm \cite{williams1992}, as well as the variance-reduced G(PO)MDP algorithm \cite{baxter2001jair}. Both REINFORCE and G(PO)MDP are Monte-Carlo sampling-type algorithms that estimate the policy gradient using the rollout trajectory data. To further reduce the variance, a policy gradient estimate that utilizes Q-function approximation was developed in \cite{sutton2000}, based on a policy gradient theorem derived therein. Since then, several PG variants have made significant progress in accelerating convergence  \cite{kakade2002natural}, reducing variance \cite{papini18icml}, dealing with nonlinear policies \cite{paternain2020stochastic}, handling continuous actions \cite{silver2014dpg}, and ensuring policy improvement \cite{papini2017nips}, especially with deep neural networks as function approximators \cite{schulman2015icml,lillicrap2015,schulman2017proximal}. However, all these algorithms were developed for the single-learner setting. 

\textbf{Distributed RL.}
Distributed RL has been investigated in the regimes of both multi-agent RL and parallel RL. The studies of multi-agent RL can be traced back to \cite{claus1998dynamics} and \cite{wolpert1999general}, with applications to network routing \cite{boyan1994packet} and power network control \cite{schneider1999distributed}. All these works, however, rather heuristically build on the direct modification of Q-learning from a single- to multi-agent settings, without performance guarantees.  The first distributed RL algorithm with convergence guarantees has been reported in \cite{lauer2000algorithm}, although tailored for the tabular multi-agent MDP setting. More recently, \cite{kar2013cal} developed a distributed Q-learning algorithm, termed \emph{QD-learning}, over networked agents that can only communicate with their neighbors. In the same setup, fully decentralized actor-critic algorithms with function approximation were developed in \cite{zhang18icml} to handle large or even continuous state-action spaces. From an empirical viewpoint, a number of deep multi-agent collaborative RL algorithms have also been developed \cite{gupta2017cooperative,lowe2017nips, omidshafiei2017deep}.  
On the other hand, parallel RL, which can efficiently tackle the single-learner yet large-scale RL problem by exploiting parallel computation, has also drawn increasing attention in recent years. In particular, \cite{li2011mapreduce} applied the Map Reduce framework to parallelize batch RL methods, while \cite{nair2015massively} introduced the first massively distributed framework for RL. In \cite{mnih2016icml}, asynchronous RL algorithms have also been introduced to solve large-scale MDPs. This parallelism was shown to stabilize the training process, and also benefit data efficiency \cite{mnih2016icml}. Nonetheless, none of them has tackled the efficiency of communication.  

\textbf{Communication-efficient learning.}
Improving communication efficiency in generic distributed learning settings has attracted much attention recently, especially for supervised learning \cite{jordan2018,mcmahan2017blog}. With their undisputed performance granted, available communication-efficient methods do not directly apply to distributed RL, because they are either non-stochastic \cite{chen2018nips,zhang2015icml}, or tailored for convex problems only \cite{stich2018local}. 
However, PG-type methods in distributed RL are inherently dealing with nonconvex stochastic problems. 
Algorithms for nonconvex problems are available e.g., \cite{alistarh2017nips}, but they are designed to minimize the required bandwidth per communication, not the rounds. 
Communication-efficient optimization has also been studied under the name event-triggered control, primarily for the consensus-based control tasks \cite{dimarogonas2011distributed}. 
In recent years, researchers have extensively studied its application to distributed continuous-time optimization in networked systems, e.g., \cite{nowzari2019event,liu2017asynchronous}. However, there are two fundamental differences between the event-triggered control algorithms and our lazy aggregation algorithm. First, the  event-triggered condition is mostly nonadaptive compared to the adaptive communication condition. Second, the goal of event-triggered control is to maximize the time between two consecutive control actions while still ensuring convergence rather than improving the communication complexity.

Aiming to reduce the number of communication rounds,  the lazily aggregated gradient (LAG) algorithm for communication-efficient distributed supervised learning has been developed recently in \cite{chen2018nips}. In addition to solving a different problem, the technical novelty of LAPG relative to LAG in \cite{chen2018nips} lies in i) LAPG applies to the stochastic settings with possibly biased stochastic gradients; and, ii) to overcome the bias and variance, a modified communication rule is developed along with more involved probabilistic arguments. 
Another challenge of developing optimization algorithms for RL relative to existing stochastic settings is that the distribution used to sample data is a function of the time-varying parameters, which introduces non-stationarity. 

Therefore, communication-efficient distributed RL is a challenging task, and so far it has been a less explored territory.

\vspace{0.1cm}
\noindent\textbf{Notation}. Bold lowercase letters denote column vectors, which are transposed by $(\cdot)^{\top}$. And $\|\mathbf{x}\|$ denotes the $\ell_2$-norm of $\mathbf{x}$. Inequalities for vectors $\mathbf{x} > \mathbf{0}$ will be defined entrywise. Symbol $\mathbb{E}$ denotes expectation, $\mathbb{P}$ stands for probability, and $\Delta({\cal S})$ denotes a distribution over $\cal S$.

\section{Distributed Reinforcement Learning}
In this section, we present the essential background on distributed RL and the plain-vanilla PG methods that can be applied to solve the distributed RL tasks.

\subsection{Problem statement}
Consider a central controller, and a group of $M$ distributed \emph{learners}, belonging to a set ${\cal M}:=\{1,\ldots,M\}$.  Depending on the specific distributed RL setting to be introduced shortly, a learner can be either an agent in the multi-agent collaborative RL, or, a worker in the parallel RL setup. As in conventional RL, the distributed RL task can be cast under the umbrella of MDP, described by the following sextuple 
\begin{equation}\label{eq.tuple}
\left({\cal S}, {\cal A}, {\cal P}, \gamma, \rho, \{\ell_m\}_{m\in{\cal M}} \right)  	
\end{equation}
where ${\cal S}$ and  ${\cal A}$ are, respectively, the state space and the action space for all learners; ${\cal P}$ is the space of the state transition kernels defined as mappings ${\cal S}\times {\cal A}\rightarrow \Delta({\cal S})$; $\gamma\in (0,1)$ is the discounting factor; $\rho$ is the initial state distribution; and $\ell_m\!:\!{\cal S}\times {\cal A}\!\rightarrow\! \mathbb{R}$ is the loss (or negative reward) for learner $m$. 

Based on the sextuple \eqref{eq.tuple}, a \emph{policy} that generates a sequence of decisions provides a solution to MDP. We consider the stochastic policy $\bbpi:{\cal S}\rightarrow \Delta({\cal A})$ that specifies a conditional distribution of all possible joint actions given the current state $\mathbf{s}$, where the probability density of taking the joint action $\mathbf{a}$ at a state $\mathbf{s}$ is denoted by $\bbpi(\mathbf{a}|\mathbf{s})$. The commonly used Gaussian policy \cite{deisenroth2010} is a function of the state-dependent mean $\bbmu(\mathbf{s})$ and covariance matrix $\mathbf{\Sigma}(\mathbf{s})$, given by $\bbpi(\,\cdot\,|\mathbf{s})=\mathcal{N}(\bbmu(\mathbf{s}),\mathbf{\Sigma}(\mathbf{s}))$. 
%In addition to the state-dependent mean, the covariance of a Gaussian policy can be also state-dependent in general; that is, $\bbpi(\,\cdot\,|\mathbf{s})=\mathcal{N}(\bbmu(\mathbf{s}),\mathbf{\Sigma}(\mathbf{s}))$. 
%The behavior of all learners are dictated by a policy, which is  denoted  by a mapping $\bbpi:{\cal S}\rightarrow \Delta({\cal A})$ from the state space to the probability measure over the action space. 
Considering discrete time $t\in\mathbb{N}$ in an infinite-horizon setting, a policy $\bbpi$ can generate a trajectory of state-action pairs ${\cal T}:=\{\mathbf{s}_0,\mathbf{a}_0, \mathbf{s}_1,\mathbf{a}_1, \mathbf{s}_2,\mathbf{a}_2,\ldots\}$ with $\mathbf{s}_t\in{\cal S}$ and $\mathbf{a}_t\in{\cal A}$. 
In distributed RL, the objective is to find the optimal policy $\bbpi$ that minimizes the infinite-horizon discounted loss aggregated over all learners, that is
\begin{align}\label{opt0}
\small
	\min_{\bbpi}\!\sum_{m\in{\cal M}}\!{\cal L}_m(\bbpi)~~{\rm with}~~{\cal L}_m(\bbpi)\!:=\mathbb{E}_{{\cal T}\sim\mathbb{P}(\cdot|\bbpi)}\!\!\left[\sum_{t=0}^{\infty}\gamma^t\ell_m(\mathbf{s}_t,\mathbf{a}_t)\!\right]
\end{align}
where $\ell_m(\mathbf{s}_t,\mathbf{a}_t)$ and ${\cal L}_m(\bbpi)$ are respectively, the loss given the state-action pair $(\mathbf{s}_t,\mathbf{a}_t)$ and the cumulative loss for learner $m$. The expectation in \eqref{opt0} is taken over the random trajectory ${\cal T}$. Given a policy $\bbpi$, the probability of generating trajectory ${\cal T}$ is given by $\mathbb{P}({\cal T}|\bbpi)=\mathbb{P}(\mathbf{s}_0,\mathbf{a}_0, \mathbf{s}_1,\mathbf{a}_1, \mathbf{s}_2,\mathbf{a}_2,\cdots|\bbpi)=\rho(\mathbf{s}_0) \prod_{t=0}^{\infty} \bbpi(\mathbf{a}_t|\mathbf{s}_t)\mathbb{P}(\mathbf{s}_{t+1}|\mathbf{s}_t,\mathbf{a}_t)$, 
%\begin{equation}\label{eq.probtaj}
%	\mathbb{P}({\cal T}|\bbpi)=\mathbb{P}(\mathbf{s}_0,\mathbf{a}_0, \mathbf{s}_1,\mathbf{a}_1, \mathbf{s}_2,\mathbf{a}_2,\cdots|\bbpi)=\rho(\mathbf{s}_0) \prod_{t=0}^{\infty} \bbpi(\mathbf{a}_t|\mathbf{s}_t)\mathbb{P}(\mathbf{s}_{t+1}|\mathbf{s}_t,\mathbf{a}_t)
%\end{equation}
where $\rho(\mathbf{s}_0)$ is the probability of the initial state being $\mathbf{s}_0$, and $\mathbb{P}(\mathbf{s}_{t+1}|\mathbf{s}_t,\mathbf{a}_t)$ is the transition probability from the current state $\mathbf{s}_t$ to the next state $\mathbf{s}_{t+1}$ by taking action $\mathbf{a}_t$. Clearly, the trajectory ${\cal T}$ is determined by the underlying MDP and the policy $\bbpi$.

Depending on how different learners   are coupled with each other, the generic distributed RL formulation \eqref{opt0} includes the two popular RL settings, as highlighted next.

\textbf{Multi-agent collaborative reinforcement learning.}
A number of RL applications involve interaction between multiple heterogeneous but collaborative learners (a.k.a. agents), such as those in controlling unmanned aerial vehicle \cite{ponda2015}, and autonomous driving \cite{shalev2016safe}. 
This is referred as the multi-agent collaborative RL. 
The multi-agent RL problem can be modeled as an MDP characterized by the sextuple $\big({\cal S}, \prod_{m\in{\cal M}}\!{\cal A}_m, {\cal P}, \gamma, \rho, \{\ell_m\}_{m\in{\cal M}} \big)$, where each agent $m$ observes a global state $\mathbf{s}_t\in{\cal S}$ shared by all the agents, and takes an action $\mathbf{a}_{m,t}\in {\cal A}_m$ with the local action space denoted as ${\cal A}_m$. 
The local action of agent $m$ is generated by a local policy $\bbpi_m:{\cal S}\rightarrow \Delta({\cal A}_m)$. 
Rather than any of the local actions $\mathbf{a}_{m,t}$, the joint action $(\mathbf{a}_{1,t},\ldots,\mathbf{a}_{M,t})\in {\cal A}:=\prod_{m\in{\cal M}}\! {\cal A}_m$ determines the transition probability to the next state $\mathbf{s}_{t+1}$ as well as the loss of each agent $\ell_m(\mathbf{s}_t,(\mathbf{a}_{1,t},\ldots,\mathbf{a}_{M,t}))$. 
Accordingly, the per-learner loss in \eqref{opt0} has the following form 
\begin{align}\label{opt0-1}
\small
{\cal L}_m(\bbpi):=\mathbb{E}_{{\cal T}\sim\mathbb{P}(\cdot|\bbpi)}\left[\sum_{t=0}^{\infty}\gamma^t\ell_m\big(\mathbf{s}_t,(\mathbf{a}_{1,t},\cdots,\mathbf{a}_{M,t})\big)\right]
\end{align}
where $\bbpi:=(\bbpi_1, \ldots, \bbpi_M)$ is a joint policy that concatenates $\{\bbpi_m\}_{m\in{\cal M}}$, and the expectation in ${\cal L}_m(\bbpi)$ is taken over all possible joint state-action trajectories, given by ${\cal T}:=\{\mathbf{s}_0,(\mathbf{a}_{1,0},\ldots,\mathbf{a}_{M,0}), \mathbf{s}_1,(\mathbf{a}_{1,1},\ldots,\mathbf{a}_{M,1}),\ldots\}$. 
Replacing the action $\mathbf{a}_t$ in \eqref{opt0} by  the joint action $(\mathbf{a}_{1,t},\cdots,\mathbf{a}_{M,t})$, the collaborative RL problem can be viewed as an instance of distributed RL. 
Different from a single-agent MDP, the local action spaces of different agents can be different, and agents interact with a common environment influenced by all agents.

\textbf{Parallel reinforcement learning.} Different from the multi-agent RL, the parallel RL is motivated by solving a large-scale single-agent RL task that needs to be run in parallel on multiple computing units (a.k.a. workers) \cite{nair2015icml}. 
The advantage of parallel RL is training time reduction and stabilization of the training processes \cite{mnih2016icml}.
Under such a setting, multiple workers typically aim to learn a common policy $\bbpi:{\cal S}\rightarrow \Delta({\cal A})$ for different instances of an  \emph{identical} MDP. 
By different instances of an  {identical} MDP, we mean that each worker $m$ aims to solve an independent MDP characterized by $\left({\cal S}_m, {\cal A}_m, {\cal P}_m, \gamma, \rho_m, \ell_m\right)$. 
%\issue{THIS FLOW LOOKS GOOD, WE START WITH THE NEW SEXTUPLE, THEN EXPLAIN.}  
In particular, the local action and state spaces as well as the transition probabilities of the workers are the same; that is, ${\cal A}_m= {\cal A}$, ${\cal P}_m = {\cal P}$, and ${\cal S}_m = {\cal S},\, \forall m\in {\cal M}$. However, the losses and the initial state distributions are different across workers, where the initial state distribution of worker $m$ is $\rho_m$, and the loss of worker $m$ is $\ell_m:{\cal S}\times {\cal A}\rightarrow \mathbb{R}$. Nevertheless, they are quantities drawn from the same distributions, which satisfy  $\mathbb{E}[\rho_m(\mathbf{s})]=\rho(\mathbf{s})$ and $\mathbb{E}[\ell_m(\mathbf{s},\mathbf{a})]=\ell(\mathbf{s},\mathbf{a})$ for any $(\mathbf{s},\mathbf{a})\in{\cal S}\times{\cal A}$. 
Thus, the per-learner loss in \eqref{opt0} under the parallel RL can be written as
\begin{align}\label{opt0-2}
\small
{\cal L}_m(\bbpi):=\mathbb{E}_{{\cal T}_m\sim\mathbb{P}(\cdot|\bbpi)}\left[\sum_{t=0}^{\infty}\gamma^t\ell_m(\mathbf{s}_{m,t},\mathbf{a}_{m,t})\right]
\end{align}
where $\mathbf{s}_{m,t}\in {\cal S}_m$, $\mathbf{a}_{m,t}\in{\cal A}_m$ are, respectively, the state and  action of worker $m$, and $\bbpi$ is the  common policy to be learned. The expectation in ${\cal L}_m(\bbpi)$ is taken over all possible state-action trajectories of worker $m$, given by ${\cal T}_m:=\{\mathbf{s}_{m,0},\mathbf{a}_{m,0}, \mathbf{s}_{m,1},\mathbf{a}_{m,1}, \mathbf{s}_{m,2},\mathbf{a}_{m,2},\ldots\}$.
In contrast to the formulation of multi-agent RL in \eqref{opt0-1}, the workers in parallel RL are not coupled by the joint state transition distributions or the loss functions, but they are rather intertwined by employing a common local policy.

As a closing note of this subsection, it is important to emphasize the distinction between the distributed RL setting in this paper and the classic distributed control settings in e.g., \cite{bernstein2002complexity,nayyar2013decentralized}. In distributed control, the global state is not observed by all the learners and thus the problem is more challenging. In our considered distributed RL setting, the state is globally observable but the reward function is private to each agent, and therefore the policy gradient that we will introduce next needs to be acquired in a distributed fashion.

%\newpage
\subsection{Policy gradient methods}
Policy gradient methods have been widely used to solve RL problems with massive and possibly continuous state and action spaces, where the intended solver typically involves function approximation. To overcome the inherent difficulty of learning a function, policy gradient methods restrict the search for the best performing policy over a class of parameterized policies. 
In particular, the policy $\bbpi$ is usually parameterized by $\bbtheta\in\mathbb{R}^d$, which is denoted as $\bbpi(\cdot|\mathbf{s};\bbtheta)$, or $\bbpi(\bbtheta)$ for simplicity.  The commonly used Gaussian policy, for instance, can be parameterized as $\bbpi(\,\cdot\,|\mathbf{s};\bbtheta)=\mathcal{N}(\bbmu(\mathbf{s};\bbtheta),\mathbf{\Sigma}(\mathbf{s}))$, 
%\begin{equation}\label{eq.gpo2}
%\small
%\bbpi(\,\cdot\,|\mathbf{s};\bbtheta)=\mathcal{N}(\bbmu(\mathbf{s};\bbtheta),\mathbf{\Sigma})
%\end{equation}
where  $\bbmu(\mathbf{s};\bbtheta)$ is a general nonlinear mapping from ${\cal S}$ to ${\cal A}$ parameterized by $\bbtheta$. The mapping $\bbmu(\mathbf{s};\bbtheta)$  can either be a deep neural network with the weight parameters $\bbtheta$, or, a linear function of $\bbtheta$ of the form $\bbmu(\mathbf{s};\bbtheta)=\mathbf{\Phi}(\mathbf{s})\bbtheta$, where  $\mathbf{\Phi}(\mathbf{s})$ is the feature matrix corresponding to the  state $\mathbf{s}$. 
Accordingly, the long-term discounted reward of a parametric policy per agent $m$ is denoted by ${\cal L}_m(\bbtheta):={\cal L}_m(\bbpi(\bbtheta))$. Hence, the distributed RL problem \eqref{opt0} can be rewritten as the following parametric optimization 
\begin{align}\label{opt1}
\small
	\min_{\bbtheta}\!\sum_{m\in{\cal M}}\!{\cal L}_m(\bbtheta)~~{\rm with}~~{\cal L}_m(\bbtheta)\!:=\!\mathbb{E}_{{\cal T}\sim\mathbb{P}(\,\cdot\,|\bbtheta)}\!\!\left[\sum_{t=1}^{\infty}\gamma^t\ell_m(\mathbf{s}_t,\mathbf{a}_t)\right]
\end{align}
where the probability distribution  of a trajectory ${\cal T}$ under  the policy $\bbpi(\bbtheta)$ is denoted as $\mathbb{P}(\cdot|\bbtheta)$. 
The search for an optimal policy can thus be performed by applying the gradient descent-type iterative methods to the parameterized optimization problem \eqref{opt1}. 
By virtue of the \emph{log-trick}, the gradient of each learner's loss ${\cal L}_m(\bbtheta)$ in \eqref{opt1} can be written as \cite{baxter2001jair}
\begin{equation}
	\small
\!\!\nabla {\cal L}_m(\bbtheta)\!=\!\mathbb{E}_{{\cal T}\sim\mathbb{P}(\,\cdot\,|\bbtheta)}\!\!\left[\sum_{t=0}^{\infty}\left(\sum_{\tau=0}^t\nabla\log\bbpi(\mathbf{a}_{\tau}|\mathbf{s}_{\tau};\bbtheta)\!\!\right)\!\!\gamma^t\ell_m(\mathbf{s}_t,\mathbf{a}_t)\right]\!.\label{eq.PG}
\end{equation}
When the MDP model \eqref{eq.tuple} is unknown, or, the expectation in \eqref{eq.PG} is computationally difficult to obtain, the stochastic estimate of the policy gradient \eqref{eq.PG} is often used, that is
\begin{equation}\label{app-eq.PG2}
\small
\hat{\nabla}{\cal L}_m(\bbtheta)=\sum_{t=0}^{\infty}\left(\sum_{\tau=0}^t\nabla\log\bbpi(\mathbf{a}_{\tau}|\mathbf{s}_{\tau};\bbtheta)\right)\gamma^t\ell_m(\mathbf{s}_t,\mathbf{a}_t)
\end{equation}
which is abbreviated as G(PO)MDP policy gradient \cite{baxter2001jair}. The G(PO)MDP policy gradient is an unbiased estimator of the policy gradient, which incurs lower variance than other estimators, e.g., REINFORCE \cite{williams1992}. In our ensuing algorithm design and analysis, we will leverage the G(PO)MDP gradient. 
Nonetheless, the variance of G(PO)MDP gradient can still be high in general, and thus requires small stepsizes and sufficiently many iterations to guarantee convergence. 

\begin{figure}[t]
\vspace{-0.3cm}
\centering
\includegraphics[width=8cm]{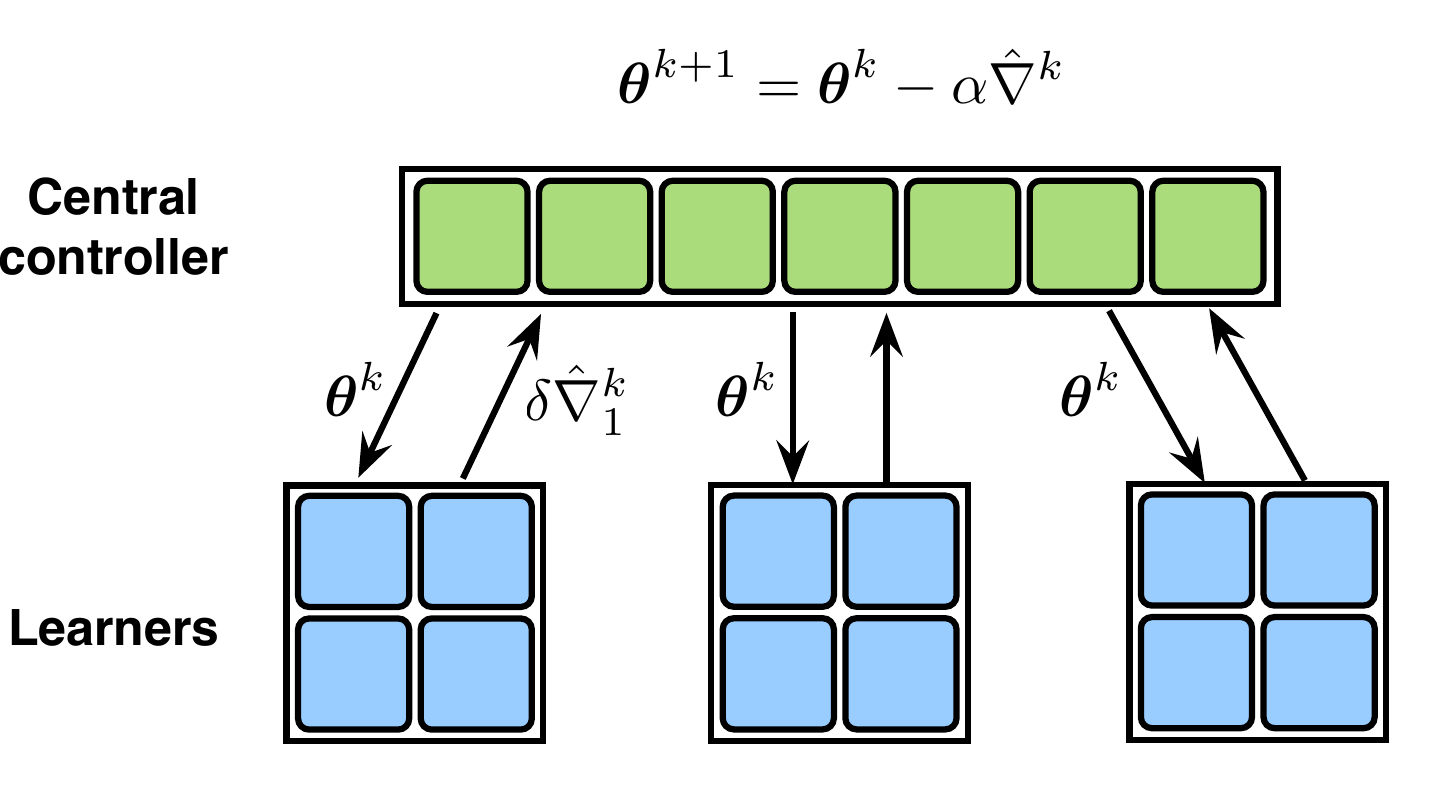}
\vspace*{-8pt}
  \caption{LAPG for communication-efficient distributed RL.}
\label{fig:pswk-diag}
\vspace*{-8pt}
\end{figure}

For the plain-vanilla PG method, a number of needed iterations result in high communication overhead, since all learners' gradients need to be uploaded to form the gradient for the objective in \eqref{opt1}. This motivates the development of communication-efficient schemes to be introduced next. 

%  \begin{algorithm}[t]
%    \caption{PG for DRL}\label{algo:f-iag}
%	\begin{algorithmic}[1]
%		\State\textbf{Input:}~Stepsize $\alpha>0$, $N$, and $T$.
%%		\State\textbf{Initialize:}~$\bbtheta^1, \{\nabla_m f(\hat{\bbtheta}^0_m)\}$.~~~\Comment{at the controller}\\
%%		\qquad \qquad \, $\nabla_m f(\hat{\bbtheta}^0_m)$.~~~~~\Comment{at each learner $m$}
%		\State\textbf{Initialize:}~$\bbtheta^1$.
%		%\\\Comment{Variable $\hat{\bbtheta}^k_m$ is a snapshot of previous $\bbtheta^k$.}
%		\For {$k= 1, 2,\ldots, K$}
%		\State Controller \textbf{broadcasts} $\bbtheta^k$ to all learners.
%	    \For {learner $m=1, \ldots, M$}
%		\State Learner $m$ \textbf{computes} $\hat{\nabla}_{N,T}{\cal L}_m(\bbtheta^k)$.
%		\State Learner $m$ \textbf{uploads} $\hat{\nabla}_{N,T}{\cal L}_m(\bbtheta^k)$.
%		\EndFor
%		\State Controller \textbf{updates} via \eqref{eq.gd1}.
%		\EndFor
%	\end{algorithmic}
%  \end{algorithm}

\section{Communication-Efficient Distributed Policy Gradient Methods}
Before introducing our approach, we revisit the popular G(PO)MDP-based PG method for solving \eqref{opt1} in the distributed RL setting. 
Define the finite-horizon approximation of the policy gradient \eqref{eq.PG} as
\begin{align}\label{app-eq.POPG}
\small
\!{\nabla}_T{\cal L}_m(\bbtheta)\!=\!\mathbb{E}_{{\cal T}\sim\mathbb{P}({\cal T}|\bbtheta)}\!\left[\sum_{t=0}^T\left(\sum_{\tau=0}^t\nabla\log\bbpi(\mathbf{a}_{\tau}|\mathbf{s}_{\tau};\bbtheta)\right)\!\gamma^t\ell_m(\mathbf{s}_t,\mathbf{a}_t)\!\right].
\end{align}
%and its single-trajectory stochastic estimate as
%\begin{align}\label{app-eq.POPG2}
%\small
%\hat{\nabla}_T{\cal L}_m(\bbtheta)=\sum_{t=0}^T\left(\sum_{\tau=0}^t\nabla\log\bbpi(\mathbf{a}_{\tau}|\mathbf{s}_{\tau};\bbtheta)\right)\gamma^t\ell_m(\mathbf{s}_t,\mathbf{a}_t).
%\end{align}

At iteration $k$, the central controller broadcasts the current policy parameter $\bbtheta^k$ to \emph{all} learners; every learner $m\in{\cal M}$ computes an approximate policy gradient of \eqref{app-eq.POPG} via
\begin{align}\label{eq.batchPG}
	&\small\hat{\nabla}_{\!N,T}{\cal L}_m\big(\bbtheta^k\big):=\\
	&\small\frac{1}{N}\sum_{n=1}^N\sum_{t=0}^T\!\left(\sum_{\tau=0}^t\nabla\log\bbpi(\mathbf{a}_{\tau}^{n,m}|\mathbf{s}_{\tau}^{n,m};\bbtheta^k)\!\right)\!\gamma^t\ell_m(\mathbf{s}_t^{n,m},\mathbf{a}_t^{n,m})\nonumber
\end{align}
where ${\cal T}_T^{n,m}:=(\mathbf{s}_0^{n,m},\mathbf{a}_0^{n,m}, \mathbf{s}_1^{n,m},\mathbf{a}_1^{n,m},\ldots,\mathbf{s}_T^{n,m},\mathbf{a}_T^{n,m})$ is the $n$-th T-slot trajectory (a.k.a. episode) generated at learner $m$; every learner $m$ then uploads $\hat{\nabla}_{\!N,T}{\cal L}_m\big(\bbtheta^k\big)$ to the central controller; and once receiving gradients from all learners, the controller updates the policy parameters via
\begin{equation}\label{eq.gd1}
\small
\bbtheta^{k+1}=\bbtheta^k-\alpha\hat{\nabla}_{\rm PG}^k~~~{\rm with}~~~\hat{\nabla}_{\rm PG}^k:=\! \sum_{m\in{\cal M}}\hat{\nabla}_{\!N,T}{\cal L}_m(\bbtheta^k)
\end{equation}
where $\alpha$ is a stepsize, and $\hat{\nabla}_{\rm PG}^k$ is an aggregated policy gradient with each component received from each learner.   
%\issue{The following gradient is in the form of REINFORCE instead of GPOMDP, right? I thought we should stick to GPOMDP since after Sec. 2.2?}
The policy gradient in \eqref{eq.batchPG} is a mini-batch G(PO)MDP gradient computed by learner $m$ using $N$ batch trajectories $\{{\cal T}_T^{n,m}\}_{n=1}^N$ over $T$ time slots. 
To implement the mini-batch PG update \eqref{eq.gd1}, however, the controller has to communicate with \emph{all} learners to obtain fresh $\{\hat{\nabla}_{\!N,T}{\cal L}_m\big(\bbtheta^k\big)\}$. The mini-batch PG approach for solving \eqref{opt0} is summarized in Algorithm \ref{algo:f-iag}.

  \begin{algorithm}[t]
    \caption{PG for distributed RL}\label{algo:f-iag}
	\begin{algorithmic}[1]
		\State\textbf{Input:}~Stepsize $\alpha>0$, $N$, and $T$.
%		\State\textbf{Initialize:}~$\bbtheta^1, \{\nabla_m f(\hat{\bbtheta}^0_m)\}$.~~~\Comment{at the controller}\\
%		\qquad \qquad \, $\nabla_m f(\hat{\bbtheta}^0_m)$.~~~~~\Comment{at each learner $m$}
		\State\textbf{Initialize:}~$\bbtheta^1$.
		%\\\Comment{Variable $\hat{\bbtheta}^k_m$ is a snapshot of previous $\bbtheta^k$.}
		\For {$k= 1, 2,\ldots, K$}
		\State Controller \textbf{broadcasts} $\bbtheta^k$ to all learners.
	    \For {learner $m=1, \ldots, M$}
		\State Learner $m$ \textbf{computes} $\hat{\nabla}_{N,T}{\cal L}_m(\bbtheta^k)$.
		\State Learner $m$ \textbf{uploads} $\hat{\nabla}_{N,T}{\cal L}_m(\bbtheta^k)$.
		\EndFor
		\State Controller \textbf{updates} via \eqref{eq.gd1}.
		\EndFor
	\end{algorithmic}
  \end{algorithm}

%  \begin{algorithm}[t]
%   \remind{ \caption{Nested AC for distributed RL}\label{algo:f-ac}
%	\begin{algorithmic}[1]
%		\State\textbf{Input:}~Stepsize $\alpha>0$, $N$, and $T$.
%%		\State\textbf{Initialize:}~$\bbtheta^1, \{\nabla_m f(\hat{\bbtheta}^0_m)\}$.~~~\Comment{at the controller}\\
%%		\qquad \qquad \, $\nabla_m f(\hat{\bbtheta}^0_m)$.~~~~~\Comment{at each learner $m$}
%		\State\textbf{Initialize:}~$\bbtheta^1$.
%		%\\\Comment{Variable $\hat{\bbtheta}^k_m$ is a snapshot of previous $\bbtheta^k$.}
%		\For {$k= 1, 2,\ldots, K$}
%		\State Controller \textbf{broadcasts} $\bbtheta^k$ to all learners.
%	    \For {learner $m=1, \ldots, M$}
%		\State Learner $m$ \textbf{computes} $\hat{\nabla}_{N,T}{\cal L}_m(\bbtheta^k)$.
%		\State Learner $m$ \textbf{uploads} $\hat{\nabla}_{N,T}{\cal L}_m(\bbtheta^k)$.
%		\EndFor
%		\State Controller \textbf{updates} via \eqref{eq.gd1}.
%		\EndFor
%	\end{algorithmic}}
%  \end{algorithm}
%  

In this context, the present paper puts forth a new policy gradient-based method for distributed RL (as simple as PG) that can \emph{skip} communication at certain rounds, which explains the name \textbf{L}azily \textbf{A}ggregated \textbf{P}olicy \textbf{G}radient (\textbf{LAPG}). With derivations postponed until later, we introduce the LAPG iteration for the distributed RL problem \eqref{opt1} that resembles the PG update \eqref{eq.gd1}, given by
\begin{equation}\label{eq.LAG1}
\small
\bbtheta^{k+1}=\bbtheta^k-\alpha\hat{\nabla}^k~~~~{\rm with}~~~~\hat{\nabla}^k:=\! \sum_{m\in{\cal M}}\hat{\nabla}_{\!N,T}{\cal L}_m\big(\hat{\bbtheta}_m^k\big)
\end{equation}
where each policy gradient $\hat{\nabla}_{\!N,T}{\cal L}_m(\hat{\bbtheta}_m^k)$ is either $\hat{\nabla}_{\!N,T}{\cal L}_m(\bbtheta^k)$, when $\hat{\bbtheta}_m^k=\bbtheta^k$, or an outdated policy gradient that has been computed using an old copy $\hat{\bbtheta}_m^k\neq \bbtheta^k$.
Instead of requesting fresh batch policy gradients from every learner in \eqref{eq.gd1}, our idea here is to obtain $\hat{\nabla}^k$ by refining the previous aggregated policy gradient $\hat{\nabla}^{k-1}$; e.g., using only the new gradients from the learners in ${\cal M}^k$, while reusing the outdated gradients from the remaining learners.
Therefore, with $\small\hat{\bbtheta}_m^k\!:=\!\bbtheta^k,\,\forall m\!\in\!{\cal M}^k,~\hat{\bbtheta}_m^k\!:=\!\hat{\bbtheta}_m^{k-1}\!\!,\,\forall m\!\notin\!{\cal M}^k$, after rearranging terms, \textbf{LAPG recursion} \eqref{eq.LAG1} becomes
%	\begin{align}\label{eq.LAG2}
%	\small
%	\bbtheta^{k+1}&=\bbtheta^k-\alpha \sum_{m\in{\cal M}}\hat{\nabla}_{N,T}{\cal L}_m\big(\hat{\bbtheta}_m^{k-1}\big)-\alpha\sum_{m\in{{\cal M}^k}}\left(\hat{\nabla}_{N,T}{\cal L}_m\big(\bbtheta^k\big)-\hat{\nabla}_{N,T}{\cal L}_m\big(\hat{\bbtheta}_m^{k-1}\big)\right)\nonumber\\
%	:\!&=\bbtheta^k-\alpha \hat{\nabla}^{k-1} - \alpha\!\!\sum_{m\in{\cal M}^k}\delta\hat{\nabla}^k_m
%\end{align}
\begin{equation}\label{eq.LAG2}
\small
	\bbtheta^{k+1}=\bbtheta^k-\alpha \hat{\nabla}^{k-1} - \alpha\!\!\sum_{m\in{\cal M}^k}\delta\hat{\nabla}^k_m
\end{equation}
where $\small\delta\hat{\nabla}^k_m:=\hat{\nabla}_{N,T}{\cal L}_m(\bbtheta^k)-\hat{\nabla}_{N,T}{\cal L}_m(\hat{\bbtheta}_m^{k-1})$ denotes the \emph{innovation} between two evaluations of $\hat{\nabla}_{N,T}{\cal L}_m$ at the current policy parameter $\bbtheta^k$ and the old copy $\hat{\bbtheta}_m^{k-1}$. 
In this way, if the central controller stores the previous $\hat{\nabla}^{k-1}$, learners in ${\cal M}^k$ only need to upload the innovation between two policy gradient evaluations; see Figure \ref{fig:pswk-diag}. 
Here the old copies for evaluating policy gradient at each learner can be different, depending on the most recent iteration that each learner uploads its fresh policy gradient.

To this point, a myopic approach to minimizing per-iteration communication is to include as few learners in ${\cal M}^k$ as possible. 
However, it will turn out that such a simple selection will lead to many more iterations that may in turn increase the total number of needed communication rounds. 
A more principled way is to guide the communication selection according to learners' optimization progress, which leads to the LAPG's selection rule presented at the end of this section. 
The first step of deriving such principle is to characterize the optimization progress of LAPG as follows. 
\begin{lemma}[LAPG descent lemma]\label{lemma1}
Assume ${\cal L}(\bbtheta):=\sum_{m\in{\cal M}}{\cal L}_m(\bbtheta)$ is $L$-smooth, and $\bbtheta^{k+1}$ is generated by running one-step LAPG iteration \eqref{eq.LAG1} given $\bbtheta^k$. If the stepsize is selected as $\alpha\leq 1/L$, then the objective values satisfy 
\begin{align}\label{eq.lemma1}
 & {\cal L}(\bbtheta^{k+1})\!-\! {\cal L}(\bbtheta^k)\!\leq\nonumber\\ 
&-\frac{\alpha}{2}\left\|\nabla {\cal L}(\bbtheta^k)\right\|^2\!\!+\frac{3\alpha}{2}\Big\|\nabla_T {\cal L}(\bbtheta^k)-\nabla{\cal L}(\bbtheta^k)\Big\|^2\nonumber\\
 & +\frac{3\alpha}{2}\Bigg\|\!\sum_{m\in{\cal M}^k_c}\!\!\delta\hat{\nabla}^k_m\Bigg\|^2\!\!+\!\frac{3\alpha}{2}\Big\|\hat{\nabla}_{N,T}{\cal L}\big(\bbtheta^k\big)\!-\!\nabla_T {\cal L}(\bbtheta^k)\Big\|^2\nonumber\\
& +\left(\frac{L}{2}-\frac{1}{2\alpha}\right)\!\left\|\bbtheta^{k+1}\!-\!\bbtheta^k\right\|^2
%:=\Delta_{\rm LAPG}^k(\bbtheta^k)
	\end{align}
	where $\delta\hat{\nabla}^k_m$ is defined in \eqref{eq.LAG2}, $\hat{\nabla}_{N,T}{\cal L}\big(\bbtheta^k\big):=\sum_{m\in{\cal M}}\hat{\nabla}_{N,T}{\cal L}_m\big(\bbtheta^k\big)$, and ${\cal M}^k_c:={\cal M}\backslash{\cal M}^k$ is the set of learners that \emph{do not} upload at iteration $k$.
\end{lemma}
\begin{IEEEproof}
See Appendix \ref{pf.Lemma1}.
\end{IEEEproof}

In Lemma \ref{lemma1}, the first term on the right hand side of \eqref{eq.lemma1} drives the descent in the objective of distributed RL, while the finite-horizon gradient approximation error (the second term), the error induced by skipping communication (the third term), as well as the variance of stochastic policy gradient (the fourth term) increase the distributed RL objective thus impede the optimization progress. 
Intuitively, the error induced by skipping communication should be properly controlled so that it is small or even negligible relative to the magnitude of policy gradients that drives the optimization progress, and also the variance-induced error of policy gradients that appears in the PG-type algorithms \cite{papini18icml}. 

To account for these error terms in the design of our algorithm, we first approximate $\small\|\nabla {\cal L}(\bbtheta^k)\|^2$ in \eqref{eq.lemma1} by the differences of successive policy parameters $\small\sum_{d=1}^D \frac{ \xi}{\alpha^2}\big\|\bbtheta^{k+1-d}-\bbtheta^{k-d}\big\|^2$, where $\xi$ is the pre-selected constant and $D$ is a pre-selected interval length, and then quantify the variance of using mini-batch policy gradient estimation in the next lemma. Note that to obtain the finite-sample analysis of LAPG, we borrow powerful tools from the celebrated probably approximately correct (PAC) learning framework, e.g., \cite{vapnik2013nature,shalev2010learnability,schapire1990strength}.

  \begin{algorithm}[t]
    \caption{LAPG for distributed RL}\label{algo:f-iag2}
	\begin{algorithmic}[1]
		\State\textbf{Input:}~Stepsize $\alpha>0$, $\{\xi_d\}$, $N$ and $T$.
		\State\textbf{Initialize:}~$\bbtheta^1, \hat{\nabla}^0, \{\hat{\bbtheta}^0_m,\forall m\}$.
%		\qquad \qquad \, $\nabla_m f(\hat{\bbtheta}^0_m)$.~~~~~\Comment{at each learner $m$}
		\For {$k= 1, 2,\ldots, K$}
%	\State Server \textbf{sends} $\bbtheta^k$ to learners violating \eqref{eq.trig-cond2}.
	    \State Controller \textbf{broadcasts} the policy parameters.	
	    \For {learner $m=1, \ldots, M$}
	    		\State Learner $m$ \textbf{computes} $\hat{\nabla}_{N,T}{\cal L}_m(\bbtheta^k)$.
	    \If {learner $m$ satisfies the condition \eqref{eq.trig-cond}}
%		\State Server \textbf{requests} $\nabla_m f(\bbtheta^k)$.
		\State Learner $m$ \textbf{uploads} $\delta\hat{\nabla}^k_m$ and saves $\hat{\bbtheta}_m^k=\bbtheta^k$.
		\Else \State{No actions at learner $m$.}
%		\\\qquad\qquad\qquad\Comment{Set $\hat{\bbtheta}_m^k\!=\!\hat{\bbtheta}_m^{k-\!1}\!$ at server}}
	    \EndIf
		\EndFor
		\State Controller \textbf{updates} the global policy via \eqref{eq.LAG2}.
		\EndFor
	\end{algorithmic}
  \end{algorithm}
  
%    \begin{algorithm}[t]
%  \remind{  \caption{LAPG-AC for distributed RL}\label{algo:f-ac2}
%	\begin{algorithmic}[1]
%		\State\textbf{Input:}~Stepsize $\alpha>0$, $\{ \xi\}$, $N$ and $T$.
%		\State\textbf{Initialize:}~$\bbtheta^1, \hat{\nabla}^0, \{\hat{\bbtheta}^0_m,\forall m\}$.
%%		\qquad \qquad \, $\nabla_m f(\hat{\bbtheta}^0_m)$.~~~~~\Comment{at each learner $m$}
%		\For {$k= 1, 2,\ldots, K$}
%%	\State Server \textbf{sends} $\bbtheta^k$ to learners violating \eqref{eq.trig-cond2}.
%	    \State Controller \textbf{broadcasts} the policy parameters.	
%	    \For {learner $m=1, \ldots, M$}
%	    		\State Learner $m$ \textbf{computes} $\hat{\nabla}_{N,T}{\cal L}_m(\bbtheta^k)$.
%	    \If {learner $m$ satisfies the condition \eqref{eq.trig-cond}}
%%		\State Server \textbf{requests} $\nabla_m f(\bbtheta^k)$.
%		\State Learner $m$ \textbf{uploads} $\delta\hat{\nabla}^k_m$ and saves $\hat{\bbtheta}_m^k=\bbtheta^k$.
%		\Else \State{No actions at learner $m$.}
%%		\\\qquad\qquad\qquad\Comment{Set $\hat{\bbtheta}_m^k\!=\!\hat{\bbtheta}_m^{k-\!1}\!$ at server}}
%	    \EndIf
%		\EndFor
%		\State Controller \textbf{updates} the global policy via \eqref{eq.LAG2}.
%		\EndFor
%	\end{algorithmic}}
%  \end{algorithm}

\begin{lemma}[PG concentration]\label{lemma.PGwhp}
Under Assumptions 1 and 2, there exists a constant $V_m$ depending on $G, \gamma, \bar{\ell}_m$ such that given $K$ and $\delta\in(0,1)$, with probability at least $1-\delta/K$, for any $\bbtheta$, we have that
\begin{equation}\label{eq.agtvar}
\Big\|\hat{\nabla}_{N,T}{\cal L}_m\big(\bbtheta\big)-\nabla_T {\cal L}_m(\bbtheta)\Big\|^2\leq \frac{2\log (2K/\delta)V_m^2}{N}:=\sigma^2_{m,N,\delta/K}	
\end{equation}
where $\small\hat{\nabla}_{N,T}{\cal L}_m\big(\bbtheta\big)$ and $\small\nabla_T {\cal L}_m(\bbtheta)$ are the stochastic policy gradient \eqref{eq.batchPG}, and the full policy gradient for the $T$-slot truncated objective \eqref{opt1}, namely, $\small\mathbb{E}_{{\cal T}\sim\mathbb{P}(\,\cdot\,|\bbtheta)}\big[\sum_{t=1}^{T}\gamma^t\ell_m(\mathbf{s}_t,\mathbf{a}_t)\big]$. 
\end{lemma}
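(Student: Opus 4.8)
The plan is to view $\hat{\nabla}_{N,T}{\cal L}_m(\bbtheta)$ as the empirical mean of $N$ independent single-trajectory G(PO)MDP estimators and to apply a bounded-vector concentration inequality. First I would write $\hat{\nabla}_{N,T}{\cal L}_m(\bbtheta)=\frac{1}{N}\sum_{n=1}^N \bbg_n$, where each summand $\bbg_n=\sum_{t=0}^T(\sum_{\tau=0}^t\nabla\log\bbpi(\mathbf{a}_\tau^{n,m}|\mathbf{s}_\tau^{n,m};\bbtheta))\gamma^t\ell_m(\mathbf{s}_t^{n,m},\mathbf{a}_t^{n,m})$ is computed from the $n$-th episode. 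Since the $N$ trajectories are drawn i.i.d.\ from $\mathbb{P}(\cdot|\bbtheta)$ and the truncated G(PO)MDP gradient is an unbiased estimator of the $T$-slot gradient (the log-trick derivation of \eqref{eq.PG}–\eqref{app-eq.PG2} restricted to $t\leq T$), each $\bbg_n$ has mean exactly $\nabla_T{\cal L}_m(\bbtheta)$. Thus the quantity to control is the deviation of an average of i.i.d.\ bounded random vectors from its common mean.

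The second step is the deterministic norm bound that supplies $V_m$. Invoking Assumption 1 (a uniform bound $\|\nabla\log\bbpi(\mathbf{a}|\mathbf{s};\bbtheta)\|\leq G$ on the score) and Assumption 2 (a uniform bound $|\ell_m|\leq\bar\ell_m$ on the per-stage loss), the triangle inequality gives $\|\bbg_n\|\leq\sum_{t=0}^T(t+1)G\gamma^t\bar\ell_m\leq G\bar\ell_m\sum_{t=0}^\infty(t+1)\gamma^t=\frac{G\bar\ell_m}{(1-\gamma)^2}$, which is finite precisely because $\gamma\in(0,1)$ makes the arithmetic-geometric series converge. I would then take $V_m$ to be a suitable constant multiple of this bound; this is exactly the $V_m$ depending on $G,\gamma,\bar\ell_m$ named in the statement, and the freedom in the constant is what lets the final inequality land on the clean form $\frac{2\log(2K/\delta)V_m^2}{N}$. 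The same bound holds for the mean, so the centered vector $\bbg_n-\nabla_T{\cal L}_m(\bbtheta)$ is bounded almost surely.

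The third step is the concentration itself. I would treat $\bbtheta$ as fixed and independent of the fresh batch, so the claim is a per-iterate statement requiring no union or covering over $\bbtheta$; the factor $K$ is reserved for the later union bound across the $K$ outer iterations, which is why the failure probability is $\delta/K$. Applying a Hilbert-space (vector) Hoeffding-type inequality — e.g.\ Pinelis' martingale bound, or McDiarmid's inequality to $f(\bbg_1,\dots,\bbg_N)=\|\frac1N\sum_n\bbg_n-\nabla_T{\cal L}_m(\bbtheta)\|$, whose coordinates each perturb $f$ by at most $2V_m/N$ — yields a tail $\mathbb{P}(\|\hat{\nabla}_{N,T}{\cal L}_m(\bbtheta)-\nabla_T{\cal L}_m(\bbtheta)\|\geq\epsilon)\leq 2\exp(-N\epsilon^2/(2V_m^2))$. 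Setting the right-hand side equal to $\delta/K$ and solving gives $\epsilon^2=2V_m^2\log(2K/\delta)/N$, which is exactly $\sigma^2_{m,N,\delta/K}$; squaring the event then recovers \eqref{eq.agtvar}.

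The main obstacle I anticipate is making the vector concentration dimension-free with the stated constants. A scalar Hoeffding applied coordinatewise would lose a $\sqrt{d}$ factor and would not produce the dimension-free $V_m^2$, so the argument must genuinely use a norm-based inequality. If I route through McDiarmid, a secondary nuisance is that it controls $f-\mathbb{E} f$ rather than $f$, so I would additionally bound $\mathbb{E} f\leq(\mathbb{E} f^2)^{1/2}\leq V_m/\sqrt{N}$ by Jensen and the i.i.d.\ variance decomposition, and absorb this lower-order term into the constant; invoking Pinelis directly avoids the expectation correction altogether. Since $V_m$ is only required to depend on $G,\gamma,\bar\ell_m$, the various absorbed constants are harmless, and the only genuinely load-bearing facts are the almost-sure boundedness from Step 2 and the unbiasedness $\mathbb{E}[\bbg_n]=\nabla_T{\cal L}_m(\bbtheta)$ from Step 1.
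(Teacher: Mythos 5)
Your proposal is correct and follows essentially the same route as the paper: the paper likewise bounds the single-trajectory deviation $\|\hat{\nabla}_T{\cal L}_m(\bbtheta)-\nabla_T{\cal L}_m(\bbtheta)\|\leq V_m$ via the loss and score bounds (its Lemma~\ref{lemma_bdPG}) and then applies Pinelis's vector martingale concentration inequality (its Lemma~\ref{lemma.whp}) to the $N$ i.i.d.\ centered trajectory gradients with failure probability $\delta/K$. The only cosmetic discrepancy is that you swapped the labels of Assumptions 1 and 2 (in the paper, Assumption 1 bounds the loss and Assumption 2 bounds the score function).
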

\begin{IEEEproof}
 See Appendix \ref{pf.lemma2}.
\end{IEEEproof}

Lemma \ref{lemma.PGwhp} suggests that the deviation of the stochastic policy gradient from the true one in \eqref{eq.lemma1} can be bounded with high probability by $\sigma^2_{m,N,\delta/K}$, which mainly depends on the number of trajectories $N$, and has logarithmic dependence on the confidence $\delta$, as well as the number of iterations $K$.

Building upon Lemmas \ref{lemma1} and \ref{lemma.PGwhp}, we will include the learner $m$ in ${\cal M}^k$ of \eqref{eq.LAG2} only if its current policy gradient has enough innovation relative to the most recently uploaded one; that is, it satisfies the following \textbf{LAPG condition}:
\begin{equation}\label{eq.trig-cond}
\!\!\left\|\delta\hat{\nabla}^k_m\right\|^2\geq \frac{\xi}{\alpha^2M^2}\sum_{d=1}^D\!\left\|\bbtheta^{k+1-d}-\bbtheta^{k-d}\right\|^2\!+6\sigma^2_{m,N,\delta/K}
\end{equation}
where $\xi$ are constant weights, and $\sigma^2_{m,N,\delta/K}$ is the variance of the policy gradient in \eqref{eq.agtvar}.
The values of $\{ \xi\}$ and $D$ are hyper-parameters and can be optimized case-by-case, while the variance $\sigma^2_{m,N,\delta/K}$ can be estimated on-the-fly in simulations.
In a nutshell, LAPG for solving the distributed RL problem \eqref{opt1} is summarized in Algorithm \ref{algo:f-iag2}.

\vspace{0.1cm}
Regarding our LAPG method, two remarks are in order.

\begin{remark}[LAPG implementation]
	By recursively updating the gradients in \eqref{eq.LAG1} and the lagged condition in \eqref{eq.trig-cond}, implementing LAPG is as simple as PG (the pseudocode in the supplementary material). The only additional complexity comes from storing the most recently uploaded policy gradient {\small $\hat{\nabla}_{N,T}{\cal L}_m(\hat{\bbtheta}_m^k)$} and checking the LAPG communication condition \eqref{eq.trig-cond}. 
Despite its simplicity, we will demonstrate that using lagged policy gradients in distributed RL can cut down a portion of unnecessary  communication among learners.
\end{remark}
%\noindent\textbf{LAPG implementation.}
%By recursively updating the gradients in \eqref{eq.LAG1} and the lagged condition in \eqref{eq.trig-cond}, implementing LAPG is as simple as PG (the pseudocode in the supplementary material). The only additional complexity comes from storing the most recently uploaded policy gradient {\small $\hat{\nabla}_{N,T}{\cal L}_m(\hat{\bbtheta}_m^k)$} and checking the LAPG communication condition \eqref{eq.trig-cond}. 
%Despite its simplicity, we will demonstrate that using lagged policy gradients in distributed RL can cut down a portion of unnecessary yet costly communication among learners.
%\vspace{0.1cm}

%Discuss extension to NPG.
\begin{remark}[Beyond LAPG]
Compared with existing efforts for improving PG in single-agent settings such as the trust region PG \cite{schulman2015icml}, the deterministic PG \cite{silver2014dpg}, and the variance-reduced PG \cite{papini18icml}, LAPG is not orthogonal to any of them.
Instead, LAPG points out an alternate direction for improving communication efficiency of solving distributed RL, and can be combined with these methods to develop even more powerful distributed RL algorithms. 
Extension to the natural policy gradient version of LAPG is also possible to remove the dimensional dependence and accelerate the convergence. 
While the current LAPG algorithm requires continuously monitoring the communication condition \eqref{eq.trig-cond}, an important extension is to allow intermittent monitoring of the condition. This can be potentially achieved by predicting the maximum change of parameters during a fixed interval, and then setting a timer to wake up the learner after a fixed number of iterations. Finally, the current analysis of LAPG in Lemma \ref{lemma.PGwhp} requires the unbiased estimate of the gradient, which is more suitable for the episodic setting. 
It is also valuable to extend the analysis to the case where LAPG is implementing in the continuing tasks.
However, these extensions go beyond the scope of this paper, and will be pursued in future work. 
\end{remark}
 
%\rev{Obviate continuous monitoring.}

\section{Main Results}
In this section, we present the main theorems quantifying the performance of LAPG.
Before that, we introduce several assumptions that serve as stepping stones for the analysis. The complete proofs of all the lemmas and theorems can be found in the supplementary document. 

%To estimate the descent, we assume that the following condition is satisfied.
\noindent\textbf{Assumption 1}:
\emph{For each state-action pair $(\mathbf{s},\mathbf{a})$, the loss $\ell_m(\mathbf{s},\mathbf{a})$ is bounded as $\ell_m(\mathbf{s},\mathbf{a})\in[0,\bar{\ell}_m]$, and thus for any $\bbtheta$, the per-learner loss is bounded as $\small{\cal L}_m(\bbtheta)\in [0,\bar{\ell}_m/(1-\gamma)]$.} 
\vspace{-0.1cm}

\noindent\textbf{Assumption 2}:
\emph{For each state-action pair $(\mathbf{s},\mathbf{a})$, and any parameter $\bbtheta\in \mathbb{R}^d$, there exist constants $G$ and $F$ such that}
\begin{equation}\label{eq.assp2}
\small
	\left\|\nabla\log \bbpi(\mathbf{a}|\mathbf{s};\bbtheta)\right\|\leq G~~~{\rm and}~~~\left|\frac{\partial^2}{\partial \theta_i \partial \theta_j}\log \bbpi(\mathbf{a}|\mathbf{s};\bbtheta)\right|\leq F
\end{equation}
where $\theta_i$ and $\theta_j$ denote the $i$th and $j$th entries of $\bbtheta$.

Assumption 1 requires boundedness of the instantaneous loss and thus the discounted cumulative loss, which is natural and commonly assumed in analyzing RL algorithms, e.g., \cite{papini18icml,zhang18icml,baxter2001jair}.  Assumption 2 requires the score function and its partial derivatives to be bounded, which can be also satisfied by a wide range of stochastic policies \cite{papini18icml,zhang20pg}. 
As we will see next, Assumptions 1 and 2 are sufficient to guarantee the smoothness of the objective function in \eqref{opt1}.

\begin{lemma}[smoothness in cumulative losses]\label{lemma-smooth}
Under Assumptions 1 and 2, the cumulative loss ${\cal L}_m(\bbtheta)$ for learner $m$ is $L_m$-smooth, that is, for any policy parameters $\bbtheta_1, \bbtheta_2\in \mathbb{R}^d$, $\|\nabla {\cal L}_m(\bbtheta_1)-\nabla{\cal L}_m(\bbtheta_2)\|\leq L_m\big\|\bbtheta_1-\bbtheta_2\big\|$ with
\begin{equation}\label{eq.smooth_lm}
\small
L_m:=\left(F+G^2+\frac{2\gamma G^2}{1-\gamma}\right)\frac{\gamma\bar{\ell}_m}{(1-\gamma)^2}
\end{equation}
where $\bar{\ell}_m$ is the upper bound of the loss for learner $m$ in Assumption 1, and $F$, $G$ are constants bounding the score function in \eqref{eq.assp2}. 
Likewise, the aggregated loss ${\cal L}(\bbtheta)$ is $L$-smooth with $L:=\sum_{m\in{\cal M}}L_m$.
%\begin{equation}
%\small
%L:=\left(F+G^2+\frac{2\gamma G^2}{1-\gamma}\right)\frac{\gamma\sum_{m\in{\cal M}}\bar{\ell}_m}{(1-\gamma)^2}.
%\end{equation}
\end{lemma}
\begin{IEEEproof}
The proof of smoothness is standard. To be self-contained, we also provide the proof in Appendix \ref{pf.lemma3}.
\end{IEEEproof}

The smoothness of the objective function is critical in the convergence analyses of many nonconvex optimization algorithms. Building upon Lemma \ref{lemma-smooth}, 
%the subsequent analysis critically builds on the following Lyapunov function:
% \begin{equation}\label{eq.Lyap}
% \small
% 	\mathbb{V}^k:={\cal L}(\bbtheta^k)-{\cal L}(\bbtheta^*)+ \frac{3}{2\alpha}\sum_{d=1}^D\sum_{\tau=d}^D \xi\left\|\bbtheta^{k+1-d}-\bbtheta^{k-d}\right\|^2
% \end{equation}
%where $\bbtheta^*$ is the minimizer of \eqref{opt0}, and $\alpha, \{ \xi\}$ are constants that will be determined later.
%For the distributed RL problem in \eqref{opt1}, 
LAPG can guarantee the following convergence result.
\begin{theorem}[iteration complexity]\label{theorem1}
Under Assumptions 1 and 2, if the stepsize $\alpha$ and the parameters $\xi$ in the LAPG condition \eqref{eq.trig-cond} are chosen such that $\alpha \leq \frac{1}{L}\big(1-3D \xi\big)$, 
%\begin{align}\label{eq.theorem1-2}
%\small
%\alpha \leq \frac{\big(1-3D \xi\big)}{L}
%\end{align} 
and the constants $T$, $K$, and $N$ are chosen to satisfy
\begin{equation}\label{eq.theorem1-0}
\small
T={\cal O} (\log(1/\epsilon)),~~~ K={\cal O}(1/\epsilon), ~~~{\rm and}~~~N={\cal O} (\log(K/\delta)/\epsilon)
\end{equation}
then with probability at least $1-\delta$, the iterates $\{\bbtheta^k\}$ generated by LAPG satisfy
\begin{equation}\label{eq.theorem1-1}
\small
\frac{1}{K}\!\sum_{k=1}^K\big\|\nabla {\cal L}(\bbtheta^k)\big\|^2\!\leq\! \frac{2({\cal L}(\bbtheta^1)-{\cal L}(\bbtheta^*))}{\alpha K}+3\sigma_T^2+21\sigma^2_{N,\delta/K}\!\leq\!\epsilon
\end{equation}
where $\sigma_T$ and $\sigma_{N,\delta/K}^2:=M\sum_{m\in{\cal M}}\sigma_{m,N,\delta/K}^2$ are some constants depending on $T, N, F, G,\gamma, \{\bar{\ell}_m\}$.
\end{theorem}
\begin{IEEEproof}
See Appendix \ref{pf.theorem1}.
\end{IEEEproof}

Theorem \ref{theorem1} asserts that even with the adaptive communication rules, LAPG can still achieve sublinear convergence to the stationary point of \eqref{opt1} as plain-vanilla PG. 

Regarding the communication complexity, it would be helpful to first estimate each learner's frequency of activating the communication condition \eqref{eq.trig-cond}. 
Ideally, we want those learners with a small reward (thus a small smoothness constant in \eqref{eq.smooth_lm}) to communicate with the controller less frequently. This intuition will be formally captured in the next lemma.
\begin{lemma}[lazy communication]\label{lemma4}
Under Assumptions 1 and 2, define the task hardness of every learner $m$ as $\mathds{H}(m):=L_m^2/L^2$.  If for a given $d$, the hardness of the learner $m$ satisfies
 \begin{equation}\label{eq.lemma4}
 \small
\mathds{H}(m)\leq \frac{ \xi}{3d\alpha^2 L^2 M^2}:=\gamma_d
 \end{equation}
 then it uploads to the controller at most $1/(d+1)$ fraction of time, with probability at least $1-2\delta$.
\end{lemma}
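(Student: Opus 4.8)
The plan is to show that an ``easy'' learner's gradient innovation $\|\delta\hat{\nabla}^k_m\|^2$ stays, with high probability, strictly below the communication threshold on the right-hand side of \eqref{eq.trig-cond} whenever only a few iterations have elapsed since its last upload, thereby forcing such a learner into long silent stretches. First I would upper bound the innovation by splitting it into a bias-free part and two stochastic fluctuations,
$$\delta\hat{\nabla}^k_m=\big(\nabla_T{\cal L}_m(\bbtheta^k)-\nabla_T{\cal L}_m(\hat{\bbtheta}_m^{k-1})\big)+\big(\hat{\nabla}_{N,T}{\cal L}_m(\bbtheta^k)-\nabla_T{\cal L}_m(\bbtheta^k)\big)-\big(\hat{\nabla}_{N,T}{\cal L}_m(\hat{\bbtheta}_m^{k-1})-\nabla_T{\cal L}_m(\hat{\bbtheta}_m^{k-1})\big).$$
Applying $\|a+b+c\|^2\le 3(\|a\|^2+\|b\|^2+\|c\|^2)$, the $L_m$-smoothness of the $T$-slot truncated objective (whose constant is bounded by the one in Lemma \ref{lemma-smooth}) to the first term, and the concentration bound of Lemma \ref{lemma.PGwhp} to the last two, I would obtain, on the event that the concentration holds at both iterates, $\|\delta\hat{\nabla}^k_m\|^2\le 3L_m^2\|\bbtheta^k-\hat{\bbtheta}_m^{k-1}\|^2+6\sigma^2_{m,N,\delta/K}$. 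The constant $6$ multiplying $\sigma^2_{m,N,\delta/K}$ in \eqref{eq.trig-cond} is chosen precisely to absorb this additive variance, leaving a purely deterministic comparison in the parameter increments.

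Next I would translate ``few iterations since the last upload'' into increment geometry. If $m$ last uploaded at iteration $k_0$ and is now at $k=k_0+\tau$, then $\hat{\bbtheta}_m^{k-1}=\bbtheta^{k_0}$, so telescoping and Cauchy--Schwarz give $\|\bbtheta^k-\hat{\bbtheta}_m^{k-1}\|^2\le \tau\sum_{j=1}^{\tau}\|\bbtheta^{k-j+1}-\bbtheta^{k-j}\|^2$. Each increment here coincides with a term $\|\bbtheta^{k+1-d'}-\bbtheta^{k-d'}\|^2$ in the threshold of \eqref{eq.trig-cond} for $d'=\tau-j+1\in\{1,\dots,\tau\}$. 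Comparing the two expressions term by term, the skip condition (the negation of \eqref{eq.trig-cond}, with the $6\sigma^2_{m,N,\delta/K}$ already cancelled) is implied as soon as $3L_m^2\tau\le \xi_{\tau}/(\alpha^2M^2)$, where the monotonicity $\xi_D\le\cdots\le\xi_1$ lets me lower bound every weight appearing at gap $\tau$ by $\xi_{\tau}$. The hypothesis $\mathds{H}(m)=L_m^2/L^2\le\gamma_d=\xi_d/(3d\alpha^2L^2M^2)$ rearranges to $3L_m^2d\le\xi_d/(\alpha^2M^2)$, and since $\xi_{\tau}\ge\xi_d$ and $\tau\le d$ for every $\tau\in\{1,\dots,d\}$, this single inequality propagates to $3L_m^2\tau\le\xi_{\tau}/(\alpha^2M^2)$ for all such $\tau$ simultaneously.

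Finally I would assemble the counting argument: the previous step shows that after uploading at $k_0$, an easy learner satisfies the skip condition at each of iterations $k_0+1,\dots,k_0+d$, hence cannot upload again before $k_0+d+1$; consecutive uploads are therefore spaced at least $d+1$ apart, yielding an upload frequency of at most $1/(d+1)$. The $1-2\delta$ guarantee follows from requiring the Lemma \ref{lemma.PGwhp} concentration to hold at both the current and the lagged iterates throughout the run, via a union bound over the $\mathcal{O}(K)$ sampling events each of probability $\delta/K$, the factor accounting for the two evaluation points entering each innovation. I expect the main obstacle to be the bookkeeping in the term-by-term matching of the second step: one must align the telescoped increments with exactly the right weights $\xi_{d'}$ and verify that the ordering assumption on $\{\xi_d\}$ together with $\tau\le d$ is precisely what lets a single hardness bound close the argument uniformly over all gap lengths $\tau\le d$. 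The remaining technical points are handling the boundary iterations $k\le D$, where the threshold sum is truncated, and confirming that the variance cancellation is exact rather than merely approximate.
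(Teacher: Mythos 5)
Your proposal is correct and follows essentially the same route as the paper's proof: the identical three-term decomposition of $\delta\hat{\nabla}^k_m$ bounded via $\|a+b+c\|^2\le 3(\|a\|^2+\|b\|^2+\|c\|^2)$, Lemma \ref{lemma.PGwhp} applied at both evaluation points, smoothness of the truncated objective, the telescoping/Cauchy--Schwarz comparison against the threshold using $\mathds{H}(m)\le\gamma_d$ and the monotonicity of $\{\xi_d\}$, and the same spacing and union-bound arguments for the $1/(d+1)$ frequency and $1-2\delta$ probability. No gaps to report.
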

\begin{IEEEproof}
	The proof can be found in Appendix \ref{pf.lemma4}.
\end{IEEEproof}

Lemma \ref{lemma4} implies that the communication frequency of each learner is proportional to its task hardness. 
In addition, choosing larger trigger constants $\{ \xi\}$ and a smaller stepsize $\alpha$ will reduce the communication frequencies of all learners. 
However, such choice of parameters will generally require many more iterations to achieve a desirable accuracy. To formally characterize the overall communication overhead of solving distributed RL, we define the communication complexity of solving the distributed RL problem \eqref{opt0} as the number of needed uploads to, with high probability, achieve $\epsilon$-policy gradient error; e.g., $\min_{k=1,\cdots,K}	 \|\nabla{\cal L}(\bbtheta^k)\|^2\leq \epsilon$. 

Building upon Theorem \ref{theorem1} and Lemma \ref{lemma4}, the communication complexity is established next.

\begin{theorem}[communication complexity]\label{theorem2}
Under Assumptions 1 and 2, define the constant $\Delta\mathbb{C}(h;\{\gamma_d\})$ as  
\begin{equation}\label{eq.prop5}
\small
\Delta\mathbb{C}(h;\{\gamma_d\}):=\sum_{d=1}^D\Big(\frac{1}{d}-\frac{1}{d+1}\Big)h\left(\gamma_d\right)	
\end{equation}
where $h$ is the cumulative density function of the learners' task hardness, given by $h(\gamma)\!:=\!\frac{1}{M}\!\sum_{m\in{\cal M}}\mathds{1}(\mathds{H}(m)\leq \gamma)$.
%	\begin{equation}\label{eq.heter-score}
%	\small
%		h(\gamma):=\frac{1}{M}\sum_{m\in{\cal M}}\mathds{1}(\mathds{H}(m)\leq \gamma).
%	\end{equation}
With the communication complexity of LAPG and PG denoted as $\mathbb{C}_{\rm LAPG}(\epsilon)$ and $\mathbb{C}_{\rm PG}(\epsilon)$, if the parameters are chosen as in \eqref{eq.theorem1-0}, with probability at least $1-4\delta$, we have 
\begin{equation}\label{prononconvex-r1}
\small
\mathbb{C}_{\rm LAPG}(\epsilon)\leq\left(1-\Delta\mathbb{C}(h;\{\gamma_d\})\right)\!\frac{\mathbb{C}_{\rm PG}(\epsilon)}{(1-3D \xi)}.
\end{equation}
%where $\mathbb{C}_{\rm PG}(\epsilon)$ is the communication complexity of PG.
Choosing the parameters as in Theorem \ref{theorem1}, and if for the heterogeneity function $h(\gamma)$ there exists $\gamma'$ such that 
%$\gamma'<\frac{h(\gamma')}{(D+1)D M^2}$, 
\begin{equation}
\small
(D+1)D M^2\gamma'< h(\gamma') 
\end{equation}
then we have that $\mathbb{C}_{\rm LAPG}(\epsilon)<\mathbb{C}_{\rm PG}(\epsilon)$.
\end{theorem}

\begin{IEEEproof}
 See Appendix \ref{pf.theorem2}.
\end{IEEEproof}

\begin{figure}[t]
\centering
\includegraphics[width=8cm]{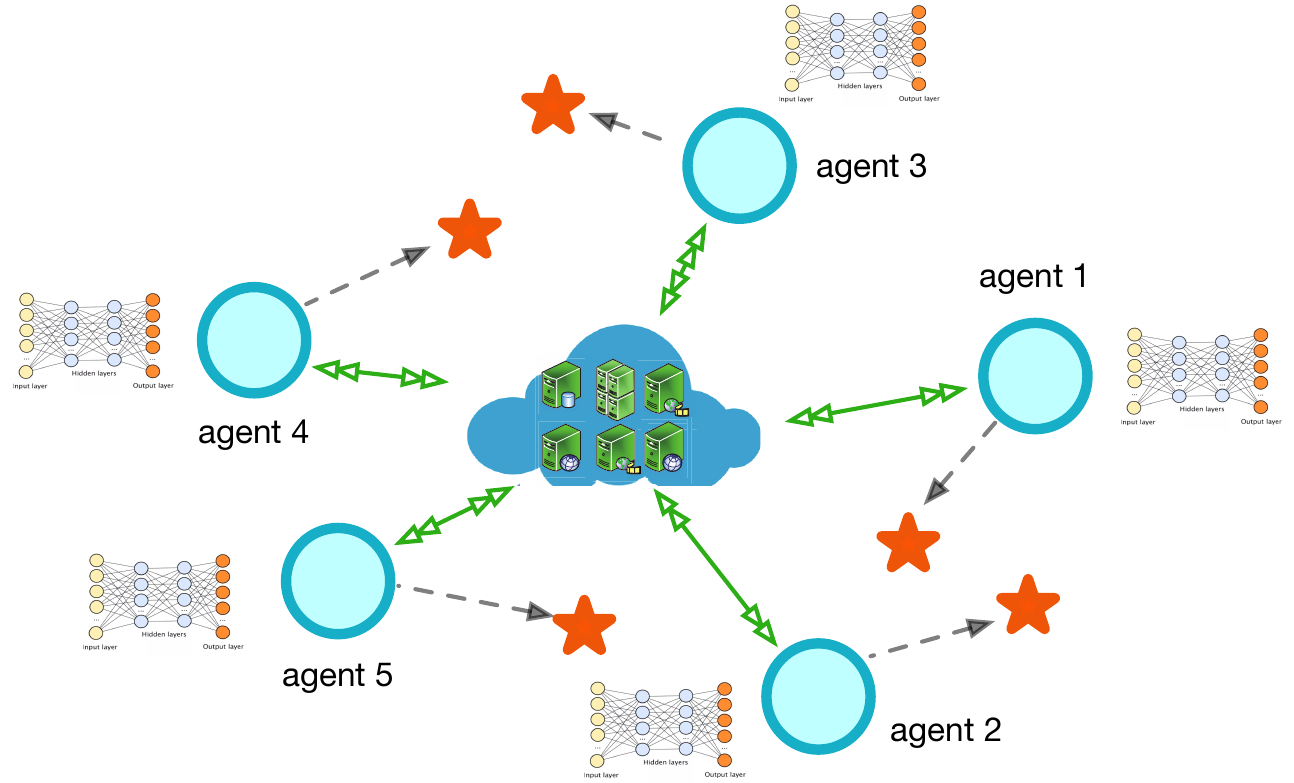}
%\vspace{-0.2cm}
  \caption{Multi-agent cooperative navigation task. The blue circles represent the agents, the stars represent the landmarks, the green arrows represent the agent-cloud communication links, and the gray arrows direct the target landmark each agent aims to cover.}
\label{fig:rl_task}
\vspace{-0.4cm}
\end{figure}
By carefully designing our communication selection rule, Theorem \ref{theorem2} demonstrates that the overall communication of LAPG is less than that of PG, provided that the reward functions (thus the smoothness constants) of each learner are very heterogeneous. 
 To see this point, consider the case where $L_m={\cal O}(1),\,\forall m=1,\ldots,M-1$, and $L_M=L={\cal O}(M^2)$. Thus, we have $h(\gamma)\geq 1- \frac{1}{M}$, if $\gamma\geq 1/L^2$. If we choose $D= M$ and $\xi=M^2D/(6L^2)={\cal O}(1/D)$ such that $\gamma_D \geq 1/L^2$, then we have (cf. \eqref{prononconvex-r1})
\begin{equation}
\small
\!\!\!\frac{\mathbb{C}_{\rm LAPG}(\epsilon)}{\mathbb{C}_{\rm PG}(\epsilon)} \leq \frac{ 1-(1- \frac{1}{D})(1- \frac{1}{M})}{1-1/2}\approx \frac{M+D}{MD}={\cal O}\left(\frac{1}{M}\right).
\end{equation}
%One can easily verify that using proper parameters, 
In this case, LAPG roughly requires ${\cal O}(1/M)$ number of communication rounds of plain-vanilla PG.

While the improved communication complexity in Theorem \ref{theorem2} builds on slightly restrictive dependence on the problem parameters, the LAPG's empirical performance goes beyond the worst-case theoretical analysis presented. 
In numerical tests, we find that using a default value $\xi=1/D$ is sufficient to demonstrate the performance gain of LAPG over PG.

\begin{figure}[t]
\centering
\begin{tabular}{cc}
\hspace{-0.25cm}\includegraphics[width=4.8cm]{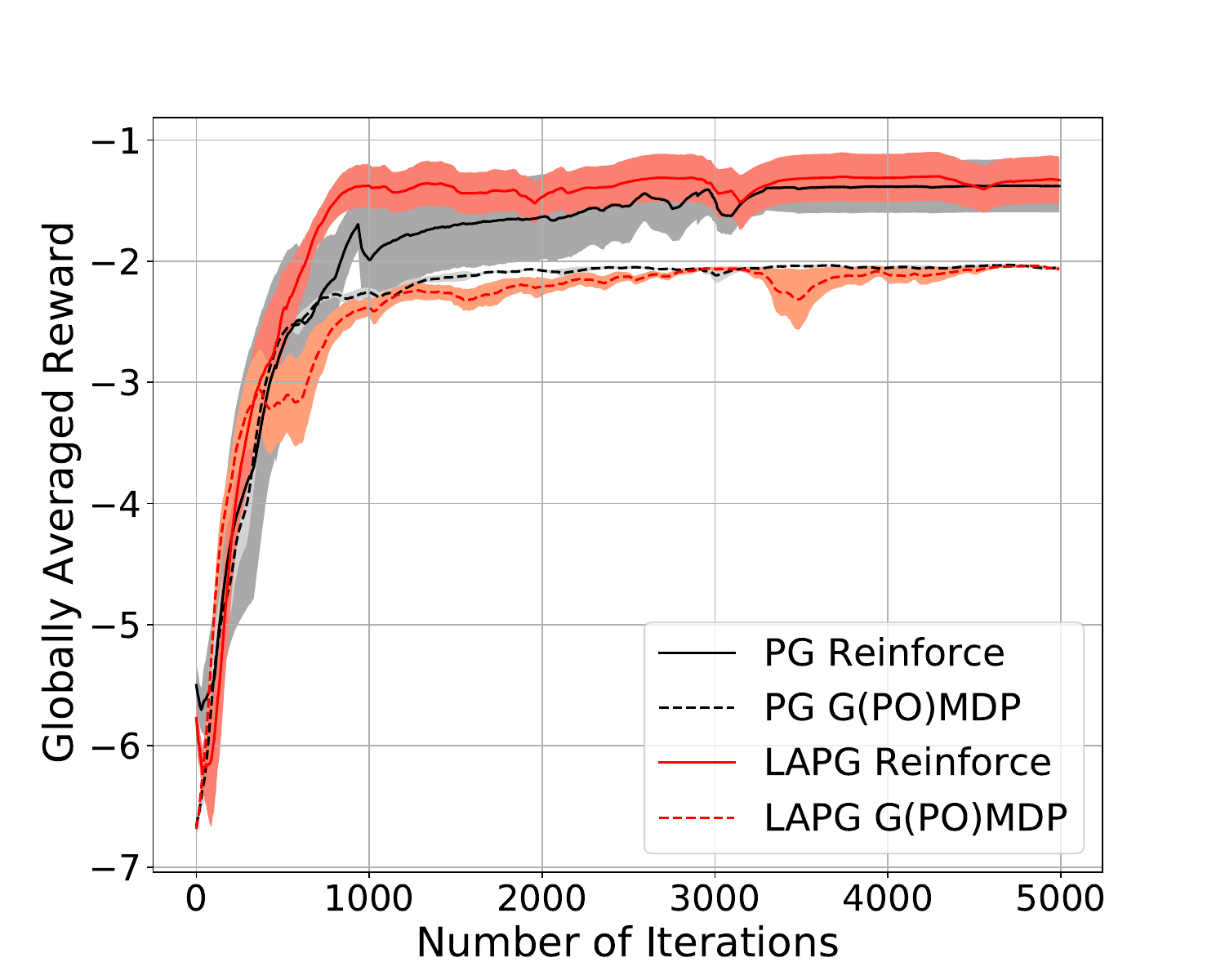}&
\hspace{-0.8cm}
\includegraphics[width=4.8cm]{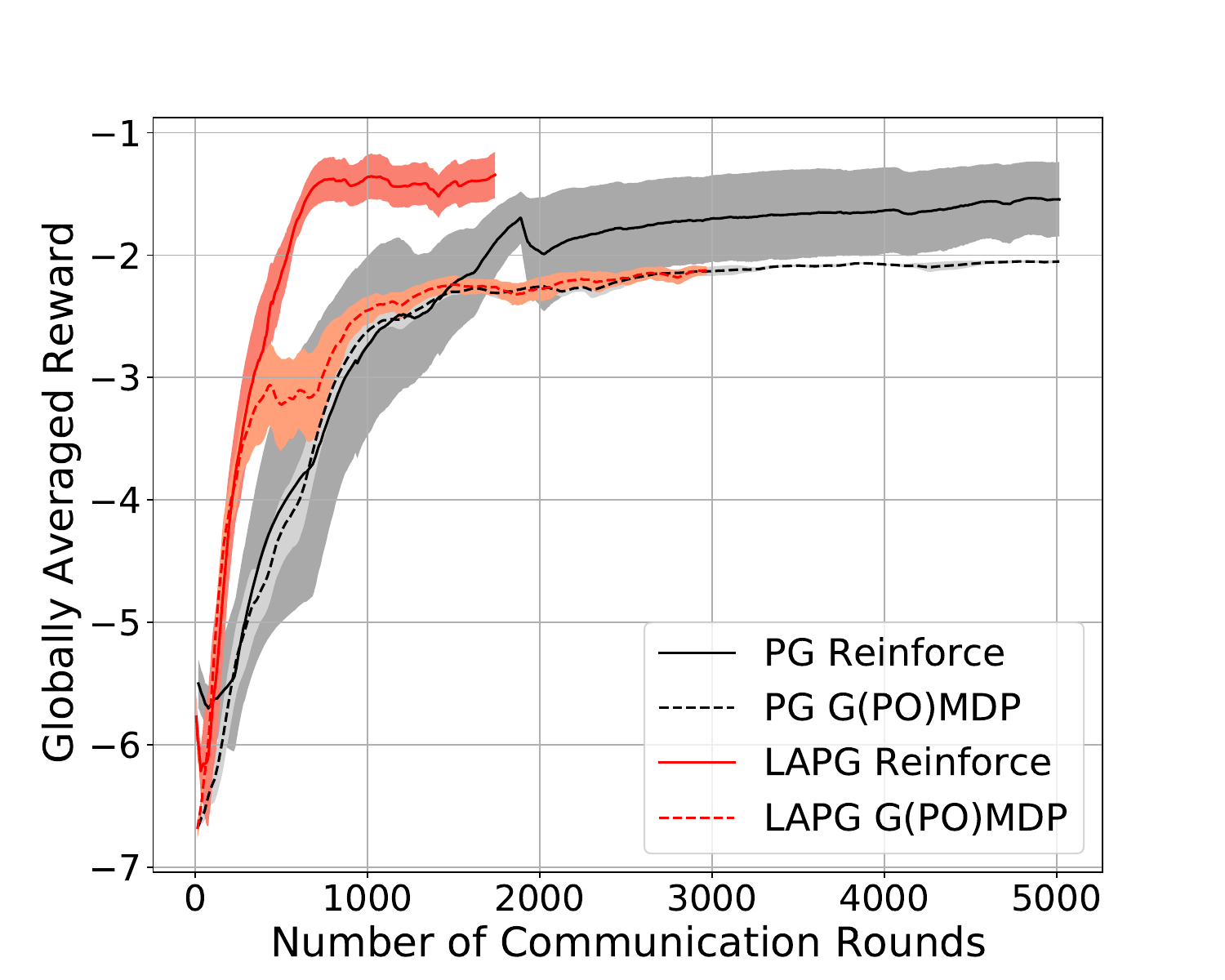}
\end{tabular}
\vspace*{-0.1cm}
  \caption{Iteration and communication complexity of two-agent parallel RL. The shaded region in all the figures represents the reward distribution of each scheme within one standard deviation of the mean.}
\label{fig:rltest1}
\vspace*{-0.4cm}
\end{figure}

\begin{figure}[t]
\centering
\begin{tabular}{cc}
\hspace{-0.25cm}\includegraphics[width=4.8cm]{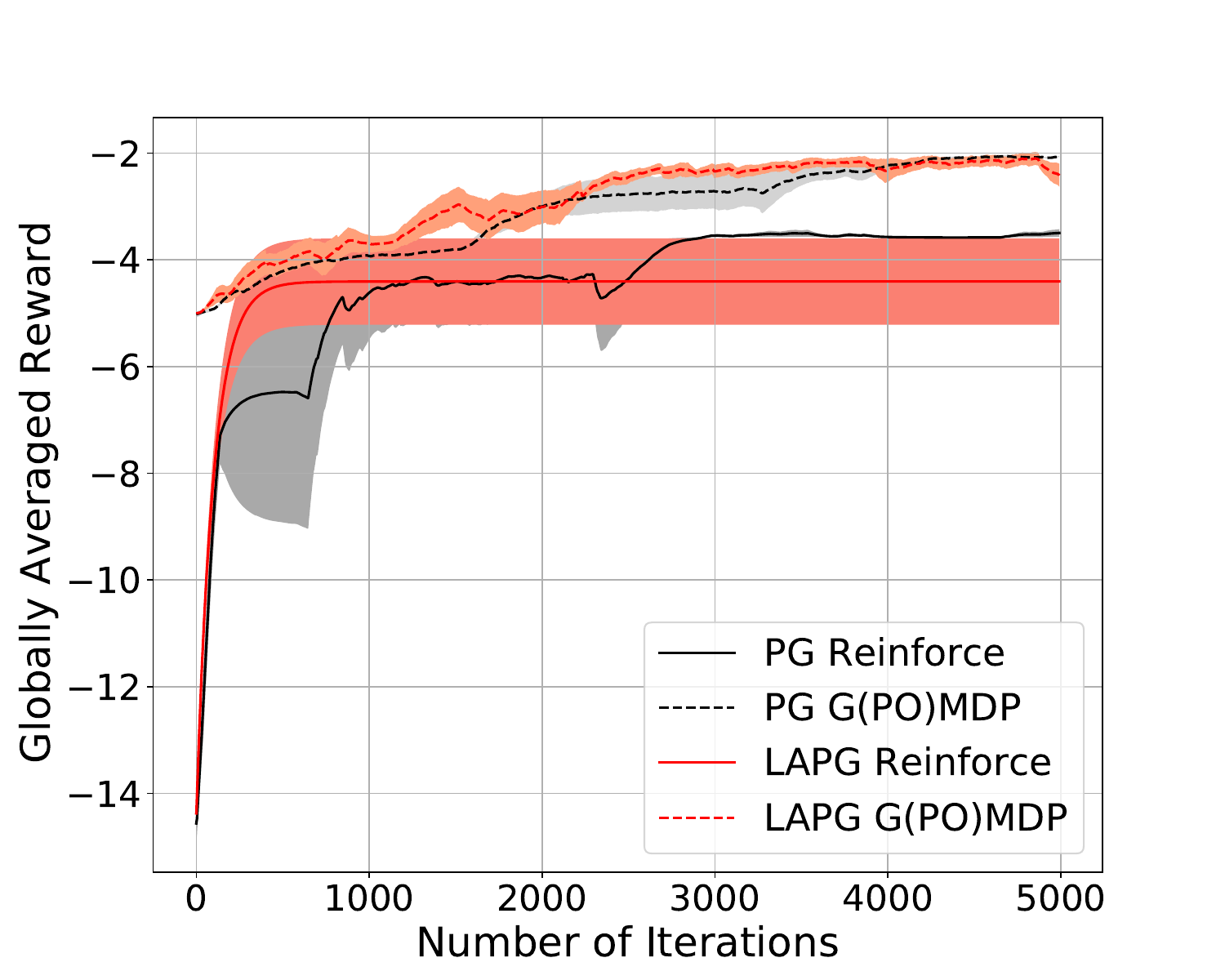}&
\hspace{-0.8cm}
\includegraphics[width=4.8cm]{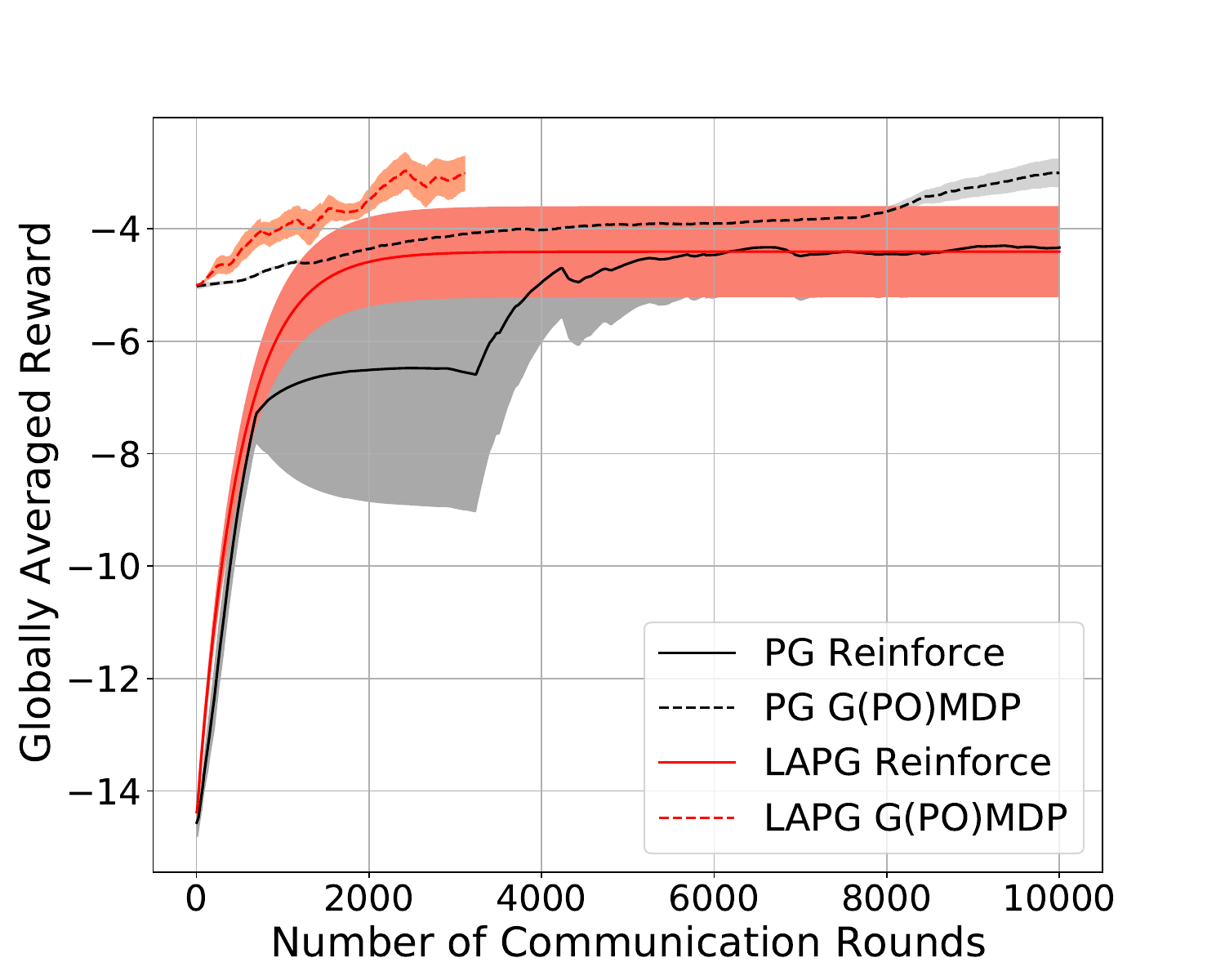}
\end{tabular}
\vspace*{-0.1cm}
  \caption{Iteration and communication complexity in five-agent parallel RL. }
\label{fig:rltest2}
\vspace*{-0.4cm}
\end{figure}

\section{Numerical Tests}
To validate the theoretical results, this section reports the empirical performance of LAPG in both the parallel RL and multi-agent RL task, as two examples of distributed RL. All experiments were performed using Python 3.6 on an Intel i7 CPU @ 3.4 GHz (32 GB RAM) desktop. 
Throughout this section, we consider the simulation environment of the \emph{Cooperative Navigation} task in \cite{lowe2017nips}, which builds on the popular OpenAI Gym paradigm  \cite{openai2016}. 
In this RL environment, $M$ agents aim to reach a set of $M$ landmarks through
physical movement, which is controlled by a set of five actions \{\emph{stay, left, right, up, down}\}. 
Agents are connected to a remote central coordinator, and are rewarded based on the proximity of their position to the one-to-one associated landmark; see the depiction in Figure \ref{fig:rl_task}.

\begin{figure}[t]
\centering
\begin{tabular}{cc}
\hspace{-0.25cm}\includegraphics[width=4.8cm]{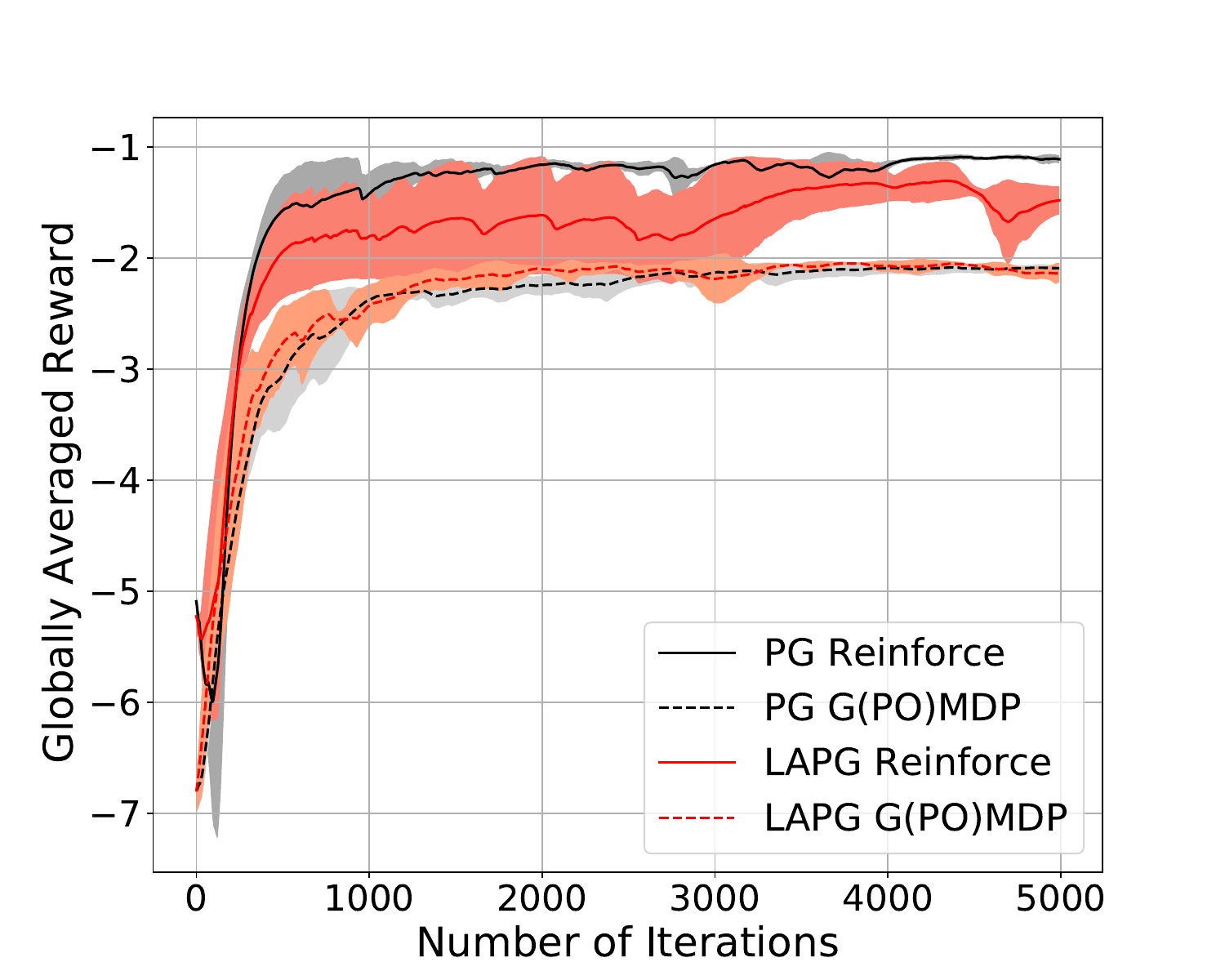}&
\hspace{-0.8cm}
\includegraphics[width=4.8cm]{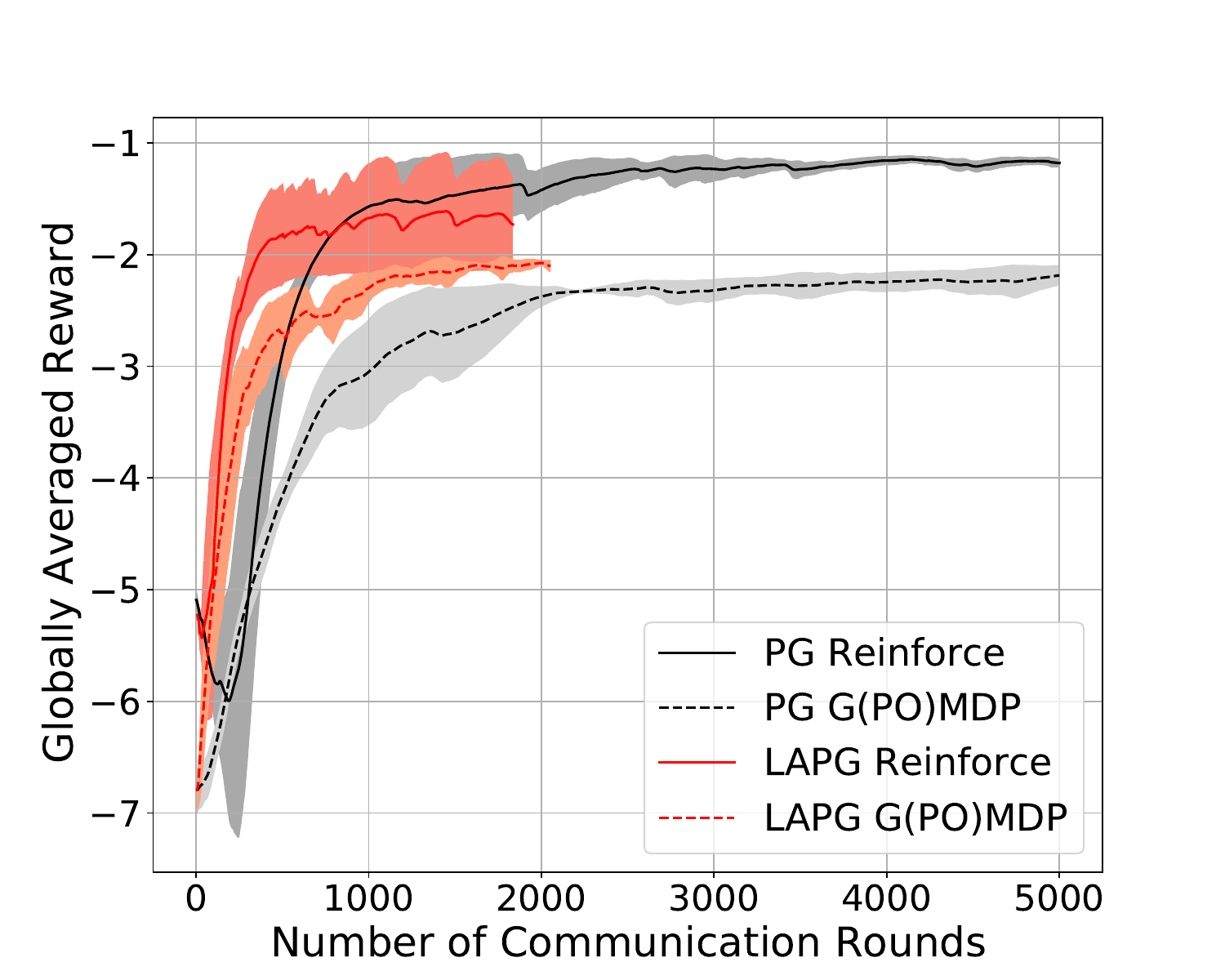}
\end{tabular}
\vspace*{-0.1cm}
  \caption{Iteration and communication complexity of two-agent multi-agent RL. }
\label{fig:rltest3}
\vspace*{-0.4cm}
\end{figure}

\begin{figure}[t]
\centering
\begin{tabular}{cc}
\hspace{-0.25cm}\includegraphics[width=4.9cm]{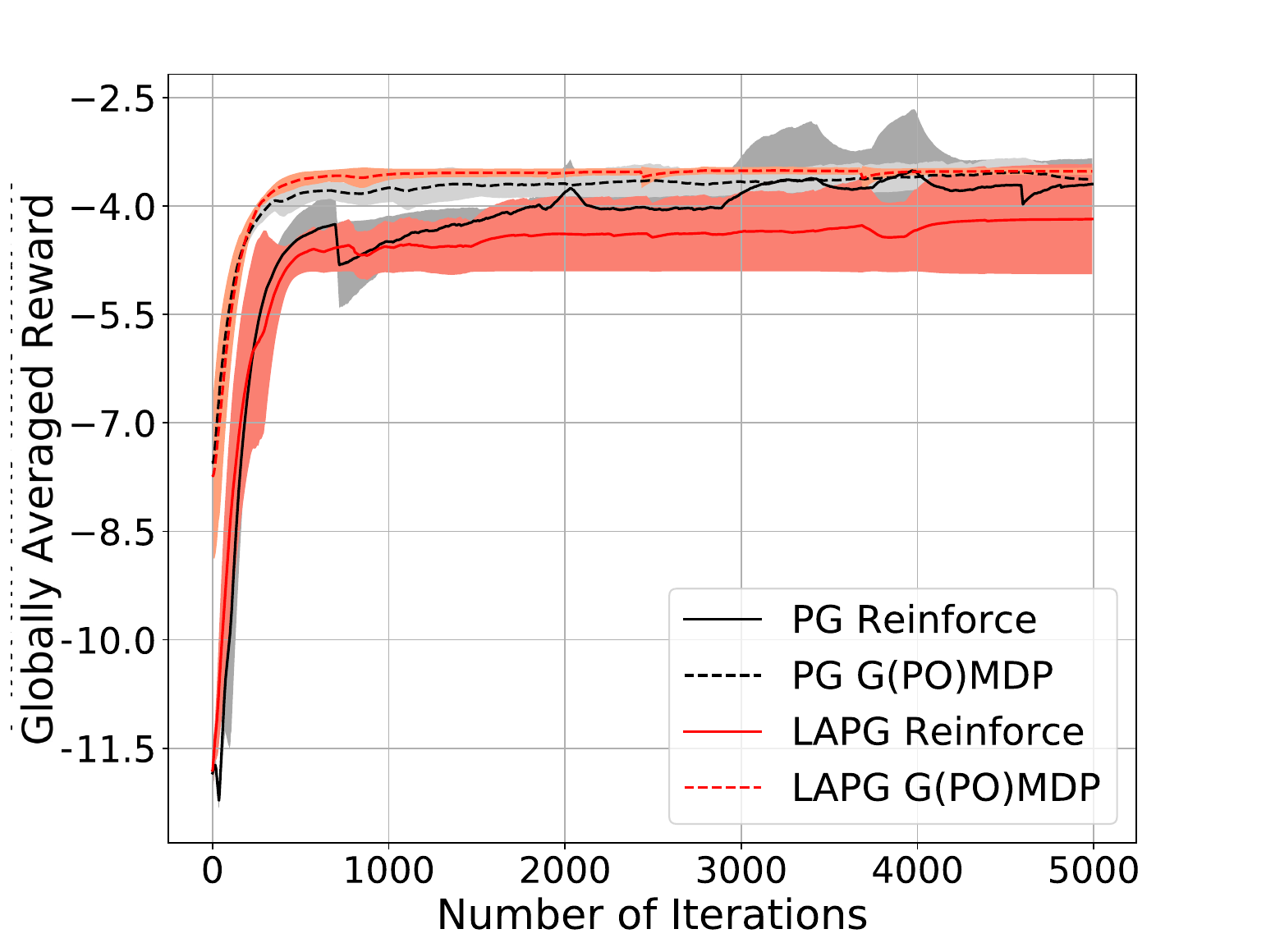}&
\hspace{-0.8cm}
\includegraphics[width=4.75cm]{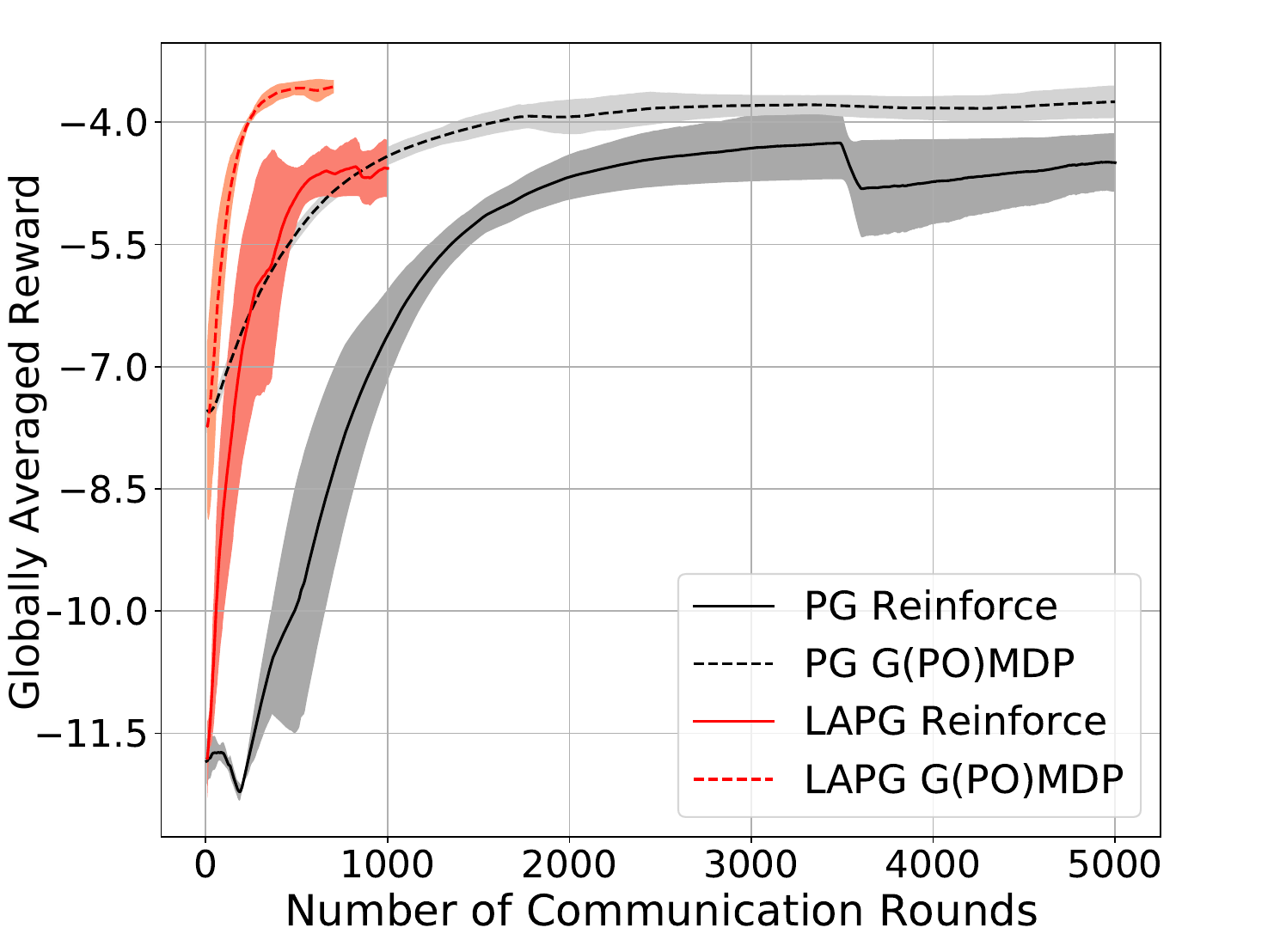}
\end{tabular}
\vspace*{-0.1cm}
  \caption{Iteration and communication complexity of five-agent multi-agent RL.}
\label{fig:rltest4}
\vspace*{-0.4cm}
\end{figure}

In the simulation, we modify the environment in \cite{lowe2017nips} as follows: i) we assume the state is globally observable, i.e., the position and velocity of other agents in a two-dimensional grid are observable to each agent; and, ii) each agent has a certain target landmark to cover, and the individual reward is determined by the proximity to that certain landmark, as well as the penalty of collision with other agents. 
In this way, the reward function varies among agents, and the individual reward of an agent also depends on the other agents' movement, which is consistent with the multi-agent RL formulation \eqref{opt0-1}. 
The reward is further scaled by different positive coefficients, representing
the heterogeneity (e.g., different priority) of different agents. The collaborative goal of the agents is to maximize the network averaged long-term reward so as to reduce distances to the landmark and avoid collisions. 
We consider two settings of the above environment: $M=2$ and $M=5$ agents. 
For $M=2$ agents, the targeted local policy of each agent $\bbpi_m(\bbtheta_m)$ is parameterized by a three-layer neural network, where the first and the second hidden layers contain 30 and 10 neural units with ReLU as the activation function, and the output layer is the softmax operator. For $M=5$, the targeted policy is again parameterized by a three-layer neural network, where the first and the second hidden layers contain 50 and 20 neurons. 

We implement LAPG using both G(PO)MDP and REINFORCE gradient estimators, and compare it with the vanilla  G(PO)MDP- and REINFORCE-based PG method. The discounting factor in the cumulative loss is $\gamma=0.99$ in all the tests. For each episode, both algorithms terminate after $T=20$ iterations.
We run in total $N=10$ batch episodes in each Monte Carlo run, and report the globally averaged reward from 5 Monte Carlo runs. 
To accelerate the training of neural networks used in policy parameterization, both LAPG and PG are implemented using heavy-ball based momentum update, where the stepsize and the momentum factor are set as 0.01 and 0.6, respectively. 

%\begin{figure}[t]
%\centering
%\begin{tabular}{cc}
%\hspace{-0.5cm}\includegraphics[width=4.8cm]{}&
%\hspace{-0.8cm}
%\includegraphics[width=4.8cm]{}
%\end{tabular}
%\vspace*{-0.1cm}
%  \caption{Iteration and communication complexity in a heterogeneous environment (Non momentum). The shaded region in all the figures represents the globally averaged reward distribution of each scheme within the half standard deviation of the mean.}
%\label{fig:rltest3}
%\end{figure}
%
%\begin{figure}[t]
%\centering
%\vspace*{-0.4cm}
%\begin{tabular}{cc}
%\hspace{-0.5cm}\includegraphics[width=4.8cm]{}&
%\hspace{-0.8cm}
%\includegraphics[width=4.8cm]{}
%\end{tabular}
%\vspace*{-0.1cm}
%  \caption{Iteration and communication complexity in a heterogeneous environment (Momentum).}
%\label{fig:rltest1}
%\vspace*{-0.4cm}
%\end{figure}
%
%\begin{figure}[t]
%\centering
%\begin{tabular}{cc}
%\hspace{-0.3cm}\includegraphics[width=4.8cm]{}&
%\hspace{-0.9cm}
%\includegraphics[width=4.8cm]{}
%\end{tabular}
%\vspace*{-0.1cm}
%  \caption{Iteration and communication complexity in a homogeneous environment (Momentum).}
%\label{fig:rltest2}
%\end{figure}

In Figures \ref{fig:rltest1} and \ref{fig:rltest2}, we compare the performance of LAPG and PG in the parallel RL task with $M=2$ and $M=5$, respectively. For $M=2$, each learner solves an independent \emph{Cooperative Navigation} task with 2 agents; for $M=5$, each learner solves an independent \emph{Cooperative Navigation} task with 5 agents. 
As shown in the figures, both G(PO)MDP- and REINFORCE-based LAPG converge within the same number of iterations as G(PO)MDP- and REINFORCE-based PG. When it comes to the number of communication rounds, LAPG requires significantly smaller amount than PG in both $M=2$ and $M=5$ cases. Comparing different policy gradient estimators, we find that REINFORCE-based estimators generally have higher variance than G(PO)MDP-based estimators.

%\begin{figure}[t]
%\centering
%\begin{tabular}{cc}
%\hspace{-0.3cm}\includegraphics[width=4.8cm]{}&
%\hspace{-0.9cm}
%\includegraphics[width=4.8cm]{}
%\end{tabular}
%\vspace*{-0.1cm}
%  \caption{Iteration and communication complexity in a five-agent environment (RELU activation).}
%\label{fig:rltest4}
%\vspace*{-0.4cm}
%\end{figure}

 In Figures \ref{fig:rltest3} and \ref{fig:rltest4}, we compare the performance of LAPG and PG in the multi-agent RL task. Different from the parallel RL setting, in this multi-agent RL case, all the learners solve the shared \emph{Cooperative Navigation} task. 
In both $M=2$ and $M=5$ tests, two versions of LAPG successfully converge using the same number of iterations as two versions of PG, but they require fewer number of communication rounds than PG. The performance gain is sizable in terms of communication. 

%Since RELU-based activation functions may introduce certain nonsmoothness in the resultant state-to-action distribution mapping, the performance is also evaluated using the softplus activation, which is a smooth approximation of the RELU activation function. 
%Clearly, the comparison in Figure \ref{fig:rltest5} confirms that LAPG still converges within much fewer number of communication rounds than PG with the smooth activation functions. 
%These observations shed the light on the potential applicability of our LAPG algorithm to large-scale distributed RL problems, when the communication cost of exchanging policy gradients is high especially using the over-parameterized neural networks as policy approximators.

\section{Conclusions}
This paper studied the distributed reinforcement learning (RL) problem involving a central controller and a group of heterogeneous learners. 
Targeting applications in communication-constrained environments, we developed a communication-cognizant method for distributed RL that we term Lazily Aggregated Policy Gradient (LAPG) approach. LAPG can achieve the same convergence rates as PG, and requires fewer communication rounds given that the learners in distributed RL are heterogeneous. 
While LAPG enjoys reduced communication overhead, it requires sufficiently many mini-batch trajectories to obtain low-variance policy gradients. Therefore, our future work will aim at reducing the sample complexity by properly reusing outdated trajectories via off-policy methods and extending the algorithms to the decentralized settings.

% In the unusual situation where you want a paper to appear in the
% references without citing it in the main text, use \nocite

%\clearpage
%\onecolumn
%\appendix
%\begin{center}
%{\Large \bf Supplementary Document for\\ 
%\vspace{0.1cm}
%``Communication-Efficient Distributed Reinforcement Learning''}
%\end{center}

%In this document, we present the missing proofs of the lemmas and theorems in the main submission document.

\appendix  
\subsection{Preliminary lemmas}
%In this section, we introduce several supporting lemmas that will lead to the subsequent analysis of LAPG.
%Define the finite-horizon approximation of the policy gradient \eqref{eq.PG} as
%\begin{align}\label{app-eq.POPG}
%\small
%\!{\nabla}_T{\cal L}_m(\bbtheta)\!=\!\mathbb{E}_{{\cal T}\sim\mathbb{P}({\cal T}|\bbtheta)}\!\left[\sum_{t=0}^T\left(\sum_{\tau=0}^t\nabla\log\bbpi(\mathbf{a}_{\tau}|\mathbf{s}_{\tau};\bbtheta)\right)\!\gamma^t\ell_m(\mathbf{s}_t,\mathbf{a}_t)\!\right]
%\end{align}
Define the single-trajectory stochastic policy gradient as
\begin{align}\label{app-eq.POPG2}
\small
\hat{\nabla}_T{\cal L}_m(\bbtheta)=\sum_{t=0}^T\left(\sum_{\tau=0}^t\nabla\log\bbpi(\mathbf{a}_{\tau}|\mathbf{s}_{\tau};\bbtheta)\right)\gamma^t\ell_m(\mathbf{s}_t,\mathbf{a}_t).
\end{align}

We have the following lemma that bounds the discrepancy between them.
\vspace{0.1cm}
\begin{lemma}[bounded PG deviation]\label{lemma_bdPG}
For the finite-horizon approximation of the policy gradient \eqref{app-eq.POPG} and its corresponding version \eqref{app-eq.POPG2}, at any $\bbtheta$ and any learner $m$, their discrepancy is bounded by
\begin{equation}
\Big\|\hat{\nabla}_{T}{\cal L}_m\big(\bbtheta\big)-\nabla_T{\cal L}_m\big(\bbtheta\big)\Big\|\leq V_m
\end{equation}
where $V_m$ is a constant depending on $G, \gamma, \bar{\ell}_m$.
\end{lemma}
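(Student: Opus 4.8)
The plan is to exploit the fact that $\nabla_T{\cal L}_m(\bbtheta)$ is, by its definition in \eqref{app-eq.POPG}, exactly the expectation of the single-trajectory estimate $\hat{\nabla}_T{\cal L}_m(\bbtheta)$ over ${\cal T}\sim\mathbb{P}(\cdot|\bbtheta)$, so the quantity to be bounded is a centered random vector. Rather than analyzing its fluctuation directly, I would first establish a \emph{uniform, deterministic} bound on $\|\hat{\nabla}_T{\cal L}_m(\bbtheta)\|$ that holds for every realization of the trajectory and every $\bbtheta$. The claimed bound on the discrepancy then follows immediately by combining the triangle inequality with Jensen's inequality applied to the expectation.

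First I would bound the norm of the single-trajectory G(PO)MDP estimate term by term. For each $t$, the triangle inequality and Assumption 2 give $\big\|\sum_{\tau=0}^t \nabla\log\bbpi(\mathbf{a}_\tau|\mathbf{s}_\tau;\bbtheta)\big\| \le (t+1)G$, while Assumption 1 gives $0\le \ell_m(\mathbf{s}_t,\mathbf{a}_t)\le \bar{\ell}_m$. Substituting these into the sum defining $\hat{\nabla}_T{\cal L}_m(\bbtheta)$ in \eqref{app-eq.POPG2} yields
\begin{equation}
\big\|\hat{\nabla}_T{\cal L}_m(\bbtheta)\big\| \le G\bar{\ell}_m\sum_{t=0}^T (t+1)\gamma^t \le G\bar{\ell}_m\sum_{t=0}^{\infty} (t+1)\gamma^t = \frac{G\bar{\ell}_m}{(1-\gamma)^2}.
\end{equation}
Because this bound is independent of the trajectory and of the horizon $T$, Jensen's inequality ($\|\mathbb{E}[X]\|\le\mathbb{E}[\|X\|]$) gives the \emph{same} bound on $\|\nabla_T{\cal L}_m(\bbtheta)\|=\|\mathbb{E}[\hat{\nabla}_T{\cal L}_m(\bbtheta)]\|$, and a final triangle inequality produces $\|\hat{\nabla}_T{\cal L}_m(\bbtheta)-\nabla_T{\cal L}_m(\bbtheta)\|\le 2G\bar{\ell}_m/(1-\gamma)^2=:V_m$, which depends only on $G$, $\gamma$, and $\bar{\ell}_m$ as required.

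The only nonroutine point is the arithmetic–geometric series $\sum_{t=0}^{\infty}(t+1)\gamma^t$: the inner cumulative sum of score functions grows \emph{linearly} in $t$ (unlike REINFORCE-type estimators where it stays uniformly bounded), so the summand is $(t+1)\gamma^t$ rather than $\gamma^t$. This still converges precisely because $\gamma\in(0,1)$, to $1/(1-\gamma)^2$, which is what produces the characteristic $(1-\gamma)^{-2}$ dependence in $V_m$. I would stress that the resulting bound is uniform in both $\bbtheta$ and $T$, so the same constant $V_m$ controls the deviation at every iterate and for any truncation level — exactly the form needed downstream, e.g., as the per-trajectory bound feeding the concentration argument of Lemma \ref{lemma.PGwhp}.
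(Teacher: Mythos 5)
Your proof is correct and follows essentially the same route as the paper's: a uniform deterministic bound on the single-trajectory G(PO)MDP estimate via Assumptions 1 and 2, followed by Jensen's inequality and the triangle inequality to bound the centered quantity by twice that supremum. The only difference is the constant — you count the $t+1$ score-function terms in $\sum_{\tau=0}^{t}$ exactly and obtain $V_m = 2G\bar{\ell}_m/(1-\gamma)^2$, whereas the paper bounds the inner sum by $tG$ and gets $2G\bar{\ell}_m\gamma/(1-\gamma)^2$; your accounting is the more careful one, and either constant satisfies the lemma as stated.
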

\begin{IEEEproof}
Using the definition of the G(PO)MDP gradient \eqref{app-eq.PG2}, we have that
%\begin{align}
%\Big\|\hat{\nabla}_{T}{\cal L}_m\big(\bbtheta\big)\Big\|&=\Bigg\|\sum_{t=0}^T\left(\sum_{\tau=0}^t\nabla\log\bbpi(\mathbf{a}_{\tau}|\mathbf{s}_{\tau};\bbtheta)\right)\gamma^t\ell_m(\mathbf{s}_t,\mathbf{a}_t)\Bigg\|\nonumber\\
%&\leq \sum_{t=0}^T\Bigg\|\left(\sum_{\tau=0}^t\nabla\log\bbpi(\mathbf{a}_{\tau}|\mathbf{s}_{\tau};\bbtheta)\right)\gamma^t\ell_m(\mathbf{s}_t,\mathbf{a}_t)\Bigg\|\nonumber\\
%&\leq \sum_{t=0}^T t G\gamma^t \bar{\ell}_m\leq G\bar{\ell}_m \sum_{t=0}^{\infty} t\gamma^t\nonumber\\
%&=\frac{G\bar{\ell}_m \gamma}{(1-\gamma)^2} :=V_m
%\end{align}
%where $V_m$ is the uniform upper bound of the G(PO)MDP stochastic policy gradient.
\begin{align}
&\small\Big\|\hat{\nabla}_T{\cal L}_m\big(\bbtheta\big)-\nabla_T{\cal L}_m\big(\bbtheta\big)\Big\|\nonumber\\
=&\small\Bigg\|\sum_{t=0}^T\left(\sum_{\tau=0}^t\nabla\log\bbpi(\mathbf{a}_{\tau}|\mathbf{s}_{\tau};\bbtheta)\right)\gamma^t\ell_m(\mathbf{s}_t,\mathbf{a}_t)\nonumber\\
&\small-\mathbb{E}_{{\cal T}\sim\mathbb{P}(\cdot|\bbtheta)}\left[\sum_{t=0}^T\left(\sum_{\tau=0}^t\nabla\log\bbpi(\mathbf{a}_{\tau}|\mathbf{s}_{\tau};\bbtheta)\right)\gamma^t\ell_m(\mathbf{s}_t,\mathbf{a}_t)\right]\Bigg\|\nonumber\\
\leq &\small2\sup_{{\cal T}\sim\mathbb{P}(\cdot|\bbtheta)}\sum_{t=0}^T\Bigg\|\left(\sum_{\tau=0}^t\nabla\log\bbpi(\mathbf{a}_{\tau}|\mathbf{s}_{\tau};\bbtheta)\right)\gamma^t\ell_m(\mathbf{s}_t,\mathbf{a}_t)\Bigg\|\nonumber\\
\stackrel{(a)}{\leq}&\small 2\sum_{t=0}^T t G\gamma^t \bar{\ell}_m\leq 2G\bar{\ell}_m \sum_{t=0}^{\infty} t\gamma^t=\small\frac{2G\bar{\ell}_m \gamma}{(1-\gamma)^2} :=V_m
\end{align}
where (a) follows from the upper bounds in Assumptions 1 and 2, and $V_m$ is the uniform upper bound of the G(PO)MDP stochastic policy gradient.
\end{IEEEproof}

\begin{lemma}[finite horizon approximation]\label{lemma-finite}
For the infinite-horizon problem \eqref{opt0} and its finite-horizon approximation, for any $\bbtheta$, the corresponding policy gradients are bounded by
\begin{equation}
\left\|\nabla {\cal L}(\bbtheta)-\nabla_T {\cal L}(\bbtheta)\right\|\leq \sum_{m\in{\cal M}}G\bar{\ell}_m\left(T+\frac{\gamma}{1-\gamma}\right)\gamma^T:=\sigma_T.	
\end{equation}
\end{lemma}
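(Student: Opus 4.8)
The plan is to bound the deviation between the true infinite-horizon policy gradient and its $T$-truncated approximation by controlling the tail of the series that defines the policy gradient \eqref{eq.PG}. Since the overall gradient decomposes as $\nabla{\cal L}(\bbtheta)-\nabla_T{\cal L}(\bbtheta)=\sum_{m\in{\cal M}}\big(\nabla{\cal L}_m(\bbtheta)-\nabla_T{\cal L}_m(\bbtheta)\big)$, by the triangle inequality it suffices to bound each per-learner discrepancy $\|\nabla{\cal L}_m(\bbtheta)-\nabla_T{\cal L}_m(\bbtheta)\|$ and then sum over $m$.

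For a fixed learner $m$, I would write out the difference $\nabla{\cal L}_m(\bbtheta)-\nabla_T{\cal L}_m(\bbtheta)$ explicitly using the expectation forms in \eqref{eq.PG} and \eqref{app-eq.POPG}. The two differ only in the range of the outer summation index $t$: the full gradient sums over all $t\geq 0$, while the truncated one stops at $t=T$. Hence the difference is precisely the tail $\mathbb{E}_{{\cal T}}\big[\sum_{t=T+1}^{\infty}\big(\sum_{\tau=0}^t\nabla\log\bbpi(\mathbf{a}_{\tau}|\mathbf{s}_{\tau};\bbtheta)\big)\gamma^t\ell_m(\mathbf{s}_t,\mathbf{a}_t)\big]$. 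First I would pass the norm inside the expectation (Jensen) and then inside the summation (triangle inequality), and apply the pointwise bounds from Assumptions 1 and 2: the inner sum $\sum_{\tau=0}^t\nabla\log\bbpi$ has norm at most $(t+1)G$ (or $tG$, matching the convention used in Lemma \ref{lemma_bdPG}), and $|\ell_m(\mathbf{s}_t,\mathbf{a}_t)|\leq\bar{\ell}_m$. This yields a bound of the form $\sum_{t=T+1}^{\infty} tG\gamma^t\bar{\ell}_m$ for each learner.

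The remaining work is the evaluation of the tail sum $\sum_{t=T+1}^{\infty} t\gamma^t$. I would compute this via the standard identity for the arithmetico-geometric tail, obtaining a closed form of the shape $\big(T+\tfrac{\gamma}{1-\gamma}\big)\tfrac{\gamma^T}{1-\gamma}$ up to the exact constant bookkeeping; the key qualitative feature is that the bound factors as a polynomial-in-$T$ prefactor times the geometric decay $\gamma^T$, exactly matching the claimed expression $G\bar{\ell}_m\big(T+\tfrac{\gamma}{1-\gamma}\big)\gamma^T$. Summing over $m\in{\cal M}$ then gives the stated $\sigma_T$. The only obstacle here is essentially cosmetic: ensuring the tail-sum constant is computed so that it collapses to the precise prefactor $\big(T+\tfrac{\gamma}{1-\gamma}\big)$ rather than a looser bound, which is a routine manipulation of geometric series and mirrors exactly the computation already carried out in the proof of Lemma \ref{lemma_bdPG}. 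Because this is a deterministic bound on a difference of expectations (no concentration or stochasticity is involved), there is no genuinely hard step; the argument is a clean tail-truncation estimate leveraging the boundedness granted by Assumptions 1 and 2.
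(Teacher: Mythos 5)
Your proposal is correct and follows essentially the same route as the paper's proof in Appendix A: express the difference as the expectation of the tail of the series beyond $T$, pass the norm inside via Jensen and the triangle inequality, apply the bounds $\|\sum_{\tau=0}^t\nabla\log\bbpi\|\leq tG$ and $\ell_m\leq\bar{\ell}_m$ from Assumptions 1 and 2, and evaluate the arithmetico-geometric tail $\sum_{t\geq T}t\gamma^t=\big(\tfrac{T}{1-\gamma}+\tfrac{\gamma}{(1-\gamma)^2}\big)\gamma^T$. The only differences are cosmetic (you pull the sum over $m$ out before applying Jensen, and you start the tail at $t=T+1$ rather than the paper's $t=T$), neither of which changes the argument.
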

\begin{IEEEproof}
	For any $\bbtheta\in \mathbb{R}^d$, it follows that
\begin{align}\label{eq.lemma-app-1}
&~~~\small\left\|\nabla {\cal L}(\bbtheta)-\nabla_T {\cal L}(\bbtheta)\right\|\nonumber\\
&\small=\left\|\mathbb{E}_{{\cal T}\sim\mathbb{P}(\cdot|\bbtheta)}\left[\sum_{m\in{\cal M}}\sum_{t=T}^{\infty}\left(\sum_{\tau=0}^t\nabla\log\bbpi(\mathbf{a}_{\tau}|\mathbf{s}_{\tau};\bbtheta)\right)\gamma^t\ell_m(\mathbf{s}_t,\mathbf{a}_t)\right]\right\|\nonumber\\
	&\small\stackrel{(a)}{\leq} \mathbb{E}_{{\cal T}\sim\mathbb{P}(\cdot|\bbtheta)}\left[\left\|\sum_{m\in{\cal M}}\sum_{t=T}^{\infty}\left(\sum_{\tau=0}^t\nabla\log\bbpi(\mathbf{a}_{\tau}|\mathbf{s}_{\tau};\bbtheta)\right)\gamma^t\ell_m(\mathbf{s}_t,\mathbf{a}_t)\right\|\right]\nonumber\\
	&\small\stackrel{(b)}{\leq} \mathbb{E}_{{\cal T}\sim\mathbb{P}(\cdot|\bbtheta)}\left[\sum_{m\in{\cal M}}\sum_{t=T}^{\infty}\left\|\left(\sum_{\tau=0}^t\nabla\log\bbpi(\mathbf{a}_{\tau}|\mathbf{s}_{\tau};\bbtheta)\right)\gamma^t\ell_m(\mathbf{s}_t,\mathbf{a}_t)\right\|\right]\nonumber\\
%	&\leq \mathbb{E}_{{\cal T}\sim\mathbb{P}(\cdot|\bbtheta)}\left[\sum_{m\in{\cal M}}\sum_{t=T+1}^{\infty}\gamma^t\bar{\ell}_m\left\|\sum_{\tau=0}^t\nabla\log\bbpi(\mathbf{a}_t|\mathbf{s}_t;\bbtheta)\right\|\right]\nonumber\\
	&\small\stackrel{(c)}{\leq} \mathbb{E}_{{\cal T}\sim\mathbb{P}(\cdot|\bbtheta)}\left[\sum_{m\in{\cal M}}\sum_{t=T}^{\infty}tG\gamma^t\bar{\ell}_m\right]\nonumber\\
&\small	=\!\mathbb{E}_{{\cal T}\sim\mathbb{P}(\cdot|\bbtheta)}\left[\sum_{m\in{\cal M}}G\bar{\ell}_m\!\sum_{t=T}^{\infty}t\gamma^t\right]\!
\end{align}
where (a) uses the Jensen's inequality, (b) follows from the triangular inequality, and (c) uses the bounds on the loss and the score functions in Assumptions 1 and 2. 
We can calculate the summation as
\begin{equation}\label{eq.lemma-app-2}
	\sum_{t=T}^{\infty}t\gamma^t=\left(\frac{T}{1-\gamma}+\frac{\gamma}{(1-\gamma)^2}\right)\gamma^T.
\end{equation}
Plugging \eqref{eq.lemma-app-2} into \eqref{eq.lemma-app-1} leads to
\begin{align}\label{eq.lemma-app-3}
\small\|\nabla {\cal L}(\bbtheta)-\nabla_T {\cal L}(\bbtheta)\|&\small\leq \mathbb{E}_{{\cal T}\sim\mathbb{P}(\cdot|\bbtheta)}\left[\sum_{m\in{\cal M}}G\bar{\ell}_m\!\sum_{t=T}^{\infty}t\gamma^t\right]\nonumber\\
&\small=\sum_{m\in{\cal M}}G\bar{\ell}_m\left(T+\frac{\gamma}{1-\gamma}\right)\frac{\gamma^T}{1-\gamma}
\end{align}
from which the proof is complete.
\end{IEEEproof}

\subsection{Proof of Lemma \ref{lemma1}}\label{pf.Lemma1}
	Using the smoothness of ${\cal L}$ in Lemma \ref{lemma-smooth}, we have that
\begin{equation}\label{eq.pf-lemma1-1}
\small
	{\cal L}(\bbtheta^{k+1})-{\cal L}(\bbtheta^k)\leq \left\langle \nabla{\cal L}(\bbtheta^k), \bbtheta^{k+1}-\bbtheta^k\right\rangle+\frac{L}{2}\left\|\bbtheta^{k+1}-\bbtheta^k\right\|^2.
\end{equation}

Note that \eqref{eq.LAG2} can be also written as (cf. {\small$\hat{\nabla}_{N,T}{\cal L}\big(\bbtheta^k\big):=\sum_{m\in{\cal M}}\hat{\nabla}_{N,T}{\cal L}_m\big(\bbtheta^k\big)$})
	\begin{align}\label{eq.LAG3}
	\bbtheta^{k+1}=\bbtheta^k-\alpha \hat{\nabla}_{N,T}{\cal L}\big(\bbtheta^k\big)+\alpha\sum_{m\in{{\cal M}^k_c}}\delta\hat{\nabla}^k_m
	\end{align}
where ${\cal M}^k_c$ is the set of agents that \emph{do not} communicate with the controller at iteration $k$.

Plugging \eqref{eq.LAG3} into $\langle \nabla{\cal L}(\bbtheta^k), \bbtheta^{k+1}-\bbtheta^k\rangle$ leads to 
%(cf. $\hat{\bbtheta}_m^k=\hat{\bbtheta}_m^{k-1},\,\forall m\in{\cal M}^k_c$)
\begin{align}\label{eq.pf-lemma1-3}
&\small \Big\langle \nabla{\cal L}(\bbtheta^k), \bbtheta^{k+1}-\bbtheta^k\Big\rangle\\
=&\small -\alpha\left\langle \nabla{\cal L}(\bbtheta^k),\hat{\nabla}_{N,T}{\cal L}\big(\bbtheta^k\big)-\sum_{m\in{\cal M}^k_c}\delta\hat{\nabla}^k_m\right\rangle\nonumber\\
=&\small -\alpha\left\langle \nabla{\cal L}(\bbtheta^k),\nabla{\cal L}(\bbtheta^k)-\nabla{\cal L}(\bbtheta^k)+\hat{\nabla}_{N,T}{\cal L}\big(\bbtheta^k\big)\!-\!\!\!\sum_{m\in{\cal M}^k_c}\!\!\delta\hat{\nabla}^k_m\right\rangle\nonumber\\
= &\small-\alpha\left\|\nabla{\cal L}(\bbtheta^k)\right\|^2\nonumber\\
& -\alpha\left\langle\! \nabla{\cal L}(\bbtheta^k),\hat{\nabla}_{N,T}{\cal L}\big(\bbtheta^k\big)-\nabla{\cal L}(\bbtheta^k)-\!\sum_{m\in{\cal M}^k_c}\!\delta\hat{\nabla}^k_m\right\rangle.\nonumber
%=& -\alpha\left\|\nabla{\cal L}(\bbtheta^k)\right\|^2 +\left\langle -\sqrt{\alpha}\nabla{\cal L}(\bbtheta^k),\sqrt{\alpha}\sum_{m\in{\cal M}^k_c}\left(\nabla{\cal L}_m\big(\hat{\bbtheta}_m^k\big)-\nabla{\cal L}_m\big(\bbtheta^k\big)\right)\right\rangle.
\end{align}

Using $2\mathbf{a}^{\top}\mathbf{b}=\|\mathbf{a}\|^2+\|\mathbf{b}\|^2-\|\mathbf{a}-\mathbf{b}\|^2$, we can re-write the inner product in \eqref{eq.pf-lemma1-3} as
\begin{align}\label{eq.pf-lemma1-4}
&\small \left\langle -\nabla{\cal L}(\bbtheta^k),\hat{\nabla}_{N,T}{\cal L}\big(\bbtheta^k\big)-\nabla{\cal L}(\bbtheta^k)-\sum_{m\in{\cal M}^k_c}\delta\hat{\nabla}^k_m\right\rangle\nonumber\\
=&\small \frac{1}{2}\left\|\nabla{\cal L}(\bbtheta^k)\right\|^2+\frac{1}{2}\Bigg\|\hat{\nabla}_{N,T}{\cal L}\big(\bbtheta^k\big)-\nabla{\cal L}(\bbtheta^k)-\sum_{m\in{\cal M}^k_c}\delta\hat{\nabla}^k_m\Bigg\|^2\nonumber\\
&\small -\frac{1}{2}\Bigg\|\hat{\nabla}_{N,T}{\cal L}\big(\bbtheta^k\big)-\sum_{m\in{\cal M}^k_c}\delta\hat{\nabla}^k_m\Bigg\|^2\nonumber\\
\stackrel{(a)}{=}&\small \frac{1}{2}\left\|\nabla{\cal L}(\bbtheta^k)\right\|^2\!+\frac{1}{2}\Bigg\|\hat{\nabla}_{N,T}{\cal L}\big(\bbtheta^k\big)-\nabla{\cal L}(\bbtheta^k)-\sum_{m\in{\cal M}^k_c}\delta\hat{\nabla}^k_m\Bigg\|^2\nonumber\\
&\small -\frac{1}{2\alpha^2}\left\|\bbtheta^{k+1}-\bbtheta^k\right\|^2
\end{align}
where (a) follows from the LAPG update \eqref{eq.LAG3}.

Define the policy gradient for the finite-horizon discounted reward as 
\begin{align}\label{eq.PG-h}
&\small \nabla_T {\cal L}(\bbtheta):=\!\!\sum_{m\in{\cal M}}\!\!\nabla_T {\cal L}_m(\bbtheta)\\
&\small {\rm with}~~\nabla_T {\cal L}_m(\bbtheta)=\mathbb{E}_{{\cal T}\sim\mathbb{P}(\cdot|\bbtheta)}\left[\nabla\log\mathbb{P}({\cal T}|\bbtheta)\left(\sum_{t=0}^T\gamma^t\ell_m(\mathbf{s}_t,\mathbf{a}_t)\right)\right]\nonumber
\end{align}
and decompose the second term in \eqref{eq.pf-lemma1-4} as 
\begin{align}\label{eq.pf-lemma1-5}
&\small\Big\|\hat{\nabla}_{N,T}{\cal L}\big(\bbtheta^k\big)-\nabla{\cal L}(\bbtheta^k)-\sum_{m\in{\cal M}^k_c}\delta\hat{\nabla}^k_m\Big\|^2\nonumber\\
=&\small\Big\|\hat{\nabla}_{N,T}{\cal L}\big(\bbtheta^k\big)-\nabla_T {\cal L}(\bbtheta^k)+\nabla_T {\cal L}(\bbtheta^k)-\nabla{\cal L}(\bbtheta^k)-\!\!\!\sum_{m\in{\cal M}^k_c}\!\!\delta\hat{\nabla}^k_m\Big\|^2\nonumber\\
\stackrel{(b)}{=}&3\Big\|\hat{\nabla}_{N,T}{\cal L}\big(\bbtheta^k\big)-\nabla_T {\cal L}(\bbtheta^k)\Big\|^2+3\Big\|\sum_{m\in{\cal M}^k_c}\delta\hat{\nabla}^k_m\Big\|^2\nonumber\\
&\small+3\Big\|\nabla_T {\cal L}(\bbtheta^k)-\nabla{\cal L}(\bbtheta^k)\Big\|^2
\end{align}
where (b) follows from the inequality $\|\mathbf{a}+\mathbf{b}+\mathbf{c}\|^2\leq 3\|\mathbf{a}\|^2+3\|\mathbf{b}\|^2+3\|\mathbf{c}\|^2$.
Combining \eqref{eq.pf-lemma1-3}, \eqref{eq.pf-lemma1-4} and \eqref{eq.pf-lemma1-5}, and plugging into \eqref{eq.pf-lemma1-1}, the claim of Lemma \ref{lemma1} follows.

\subsection{Proof of Lemma \ref{lemma.PGwhp}}\label{pf.lemma2}
The policy gradient concentration result in Lemma \ref{lemma.PGwhp} builds on the following concentration inequality.
\begin{lemma}[concentration inequality \cite{pinelis1994}]\label{lemma.whp}
If $\mathbf{X}_1, \mathbf{X}_2, \cdots, \mathbf{X}_N\in \mathbb{R}^d$ denote a vector-valued martingale difference sequence satisfying $\mathbb{E}[\mathbf{X}_n|\mathbf{X}_1,\cdots,\mathbf{X}_{n-1}]=\mathbf{0}$, and $\|\mathbf{X}_n\|\leq V,\,\forall n$, then for any scalar $\delta\in(0,1]$, we have 
\begin{equation}
\small	\mathbb{P}\left(\left\|\sum_{n=1}^N\mathbf{X}_n\right\|^2>2\log (2/\delta)V^2N \right)\leq \delta.
\end{equation} 
\end{lemma}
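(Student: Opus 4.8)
The plan is to treat this as a Chernoff (exponential-moment) bound on the scalar random variable $\|\mathbf S_N\|$, where $\mathbf S_n:=\sum_{i=1}^n\mathbf X_i$ and $\mathcal F_{n}:=\sigma(\mathbf X_1,\dots,\mathbf X_{n})$. Since the event $\{\|\mathbf S_N\|^2>r^2\}$ coincides with $\{\|\mathbf S_N\|>r\}$, it suffices to establish a sub-Gaussian tail $\mathbb P(\|\mathbf S_N\|>r)\le 2\exp(-r^2/(2NV^2))$ and then set the right-hand side equal to $\delta$, which returns $r^2=2\log(2/\delta)V^2N$ exactly as claimed. Because the increments are mean-zero (hence have no preferred direction), the natural exponential functional is the \emph{even} function $\cosh$: I would control $\cosh(\lambda\|\mathbf S_N\|)$ and only at the end pass back to $\exp(\lambda\|\mathbf S_N\|)$ via $\exp(\lambda\|\mathbf S_N\|)\le 2\cosh(\lambda\|\mathbf S_N\|)$. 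This final factor $2$ is precisely what becomes the ``$2$'' inside $\log(2/\delta)$.

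The core of the argument is a one-step sub-Gaussian recursion,
\[
\mathbb E\big[\cosh(\lambda\|\mathbf S_n\|)\,\big|\,\mathcal F_{n-1}\big]\le \cosh(\lambda\|\mathbf S_{n-1}\|)\,e^{\lambda^2V^2/2}.
\]
Iterating over $n=1,\dots,N$ (using $\mathbf S_0=\mathbf 0$, $\cosh 0=1$, and the tower property) yields $\mathbb E[\cosh(\lambda\|\mathbf S_N\|)]\le e^{N\lambda^2V^2/2}$. To obtain the recursion I would condition on $\mathcal F_{n-1}$, freeze $\mathbf u:=\mathbf S_{n-1}$, and write $\mathbf X:=\mathbf X_n$ with $\mathbb E[\mathbf X]=\mathbf 0$ and $\|\mathbf X\|\le V$; the claim reduces to the single-increment inequality $\mathbb E[\cosh(\lambda\|\mathbf u+\mathbf X\|)]\le\cosh(\lambda\|\mathbf u\|)\,e^{\lambda^2V^2/2}$. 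Two structural facts drive this. First, $s\mapsto\cosh(\lambda\sqrt s)$ is nondecreasing and convex on $[0,\infty)$ (its power series in $s$ has nonnegative coefficients), so I can feed in $\|\mathbf u+\mathbf X\|^2=\|\mathbf u\|^2+2\langle\mathbf u,\mathbf X\rangle+\|\mathbf X\|^2$ and absorb the contribution $\|\mathbf X\|^2\le V^2$ monotonically. Second, the parallelogram identity in the Hilbert space $\mathbb R^d$ makes the symmetric two-point increment extremal: for deterministic $\mathbf u,\mathbf v$ one has $\tfrac12[\cosh(\lambda\|\mathbf u+\mathbf v\|)+\cosh(\lambda\|\mathbf u-\mathbf v\|)]\le\cosh(\lambda\|\mathbf u\|)\cosh(\lambda\|\mathbf v\|)$, and the product-to-sum formula shows equality when $\mathbf v$ is colinear with $\mathbf u$. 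Combined with $\cosh(\lambda V)\le e^{\lambda^2V^2/2}$, this delivers the single-increment bound.

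With the recursion in hand, the assembly is routine: from $\mathbb E[e^{\lambda\|\mathbf S_N\|}]\le 2\,\mathbb E[\cosh(\lambda\|\mathbf S_N\|)]\le 2e^{N\lambda^2V^2/2}$ and Markov's inequality, $\mathbb P(\|\mathbf S_N\|>r)\le 2\exp(N\lambda^2V^2/2-\lambda r)$; optimizing over $\lambda>0$ at $\lambda=r/(NV^2)$ produces the exponent $-r^2/(2NV^2)$, and solving $2e^{-r^2/(2NV^2)}=\delta$ for $r$ closes the proof.

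The step I expect to be the main obstacle is the sharp single-increment inequality for a \emph{general}, possibly asymmetric, mean-zero $\mathbf X$, i.e.\ securing the constant $e^{\lambda^2V^2/2}$ rather than a weaker one. A naive symmetrization — replacing $\mathbf X$ by $\mathbf X-\mathbf X'$ with an independent copy and inserting a Rademacher sign to invoke the parallelogram inequality — is clean but doubles the effective increment bound to $2V$, giving the exponent $-r^2/(8NV^2)$ and hence the wrong constant ($8\log(2/\delta)$ in place of $2\log(2/\delta)$). Recovering the optimal factor is exactly the content of Pinelis' refinement and hinges on the $2$-smoothness of the Hilbert norm; I would therefore either invoke \citep{pinelis1994} for this kernel, or reprove it directly by a moment/convexity analysis of $\mathbb E[\cosh(\lambda(\|\mathbf u\|^2+2\langle\mathbf u,\mathbf X\rangle+\|\mathbf X\|^2)^{1/2})]$ that exploits $\mathbb E[\langle\mathbf u,\mathbf X\rangle]=0$ and $\|\mathbf X\|\le V$, with the symmetric two-point increment identified as the extremal configuration.
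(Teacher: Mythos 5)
The paper never proves this lemma at all: it is imported verbatim from Pinelis (1994) and used as a black box, so there is no in-paper argument to compare against. Your overall skeleton is nevertheless exactly Pinelis' own strategy, and it is sound: bound $\mathbb{E}[\cosh(\lambda\|\mathbf{S}_N\|)]$ through a one-step conditional recursion, pass to the moment generating function via $e^{y}\leq 2\cosh(y)$ (which is indeed where the $2$ inside $\log(2/\delta)$ comes from), and finish with Markov's inequality optimized at $\lambda=r/(NV^2)$; the arithmetic at the end, $r^2=2\log(2/\delta)V^2N$, is correct, as is your diagnosis that naive symmetrization would degrade the constant to $8\log(2/\delta)$.

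The one step that would fail as sketched is your proposed mechanism for the single-increment inequality, namely reducing it to the symmetric two-point bound. That two-point inequality is true in a Hilbert space (convexity of $s\mapsto\cosh(\lambda\sqrt{s})$ in $s=\|\mathbf{u}\|^2+\|\mathbf{v}\|^2\pm 2\langle\mathbf{u},\mathbf{v}\rangle$, maximized at colinear $\mathbf{u},\mathbf{v}$ where the product-to-sum identity gives equality), but symmetric two-point laws are \emph{not} the extreme points of the set of mean-zero laws on the ball $\{\|\mathbf{x}\|\leq V\}$: the extreme points are asymmetric colinear two-point laws, e.g. $\mathbf{X}=\mathbf{a}$ w.p. $p$ and $\mathbf{X}=-\frac{p}{1-p}\mathbf{a}$ w.p. $1-p$. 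For such laws the Jensen-type strengthening $\mathbb{E}[\cosh(\lambda\|\mathbf{u}+\mathbf{X}\|)]\leq\cosh(\lambda\|\mathbf{u}\|)\,\mathbb{E}[\cosh(\lambda\|\mathbf{X}\|)]$ is actually \emph{false}: already in one dimension, $\cosh(\lambda(u+X))=\cosh(\lambda u)\cosh(\lambda X)+\sinh(\lambda u)\sinh(\lambda X)$, and the cross term $\mathbb{E}[\sinh(\lambda X)]$ is strictly positive for $X\in\{2\varepsilon \text{ w.p. } 1/3,\ -\varepsilon \text{ w.p. } 2/3\}$ because $\sinh$ is strictly convex on $[0,\infty)$. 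The inequality only survives because of the Hoeffding-type slack $e^{\lambda^2V^2/2}\geq\cosh(\lambda V)$, and the clean way to secure it in the Hilbert setting is not an extremality argument but Pinelis' differential one: set $h(t)=\mathbb{E}[\cosh(\lambda\|\mathbf{u}+t\mathbf{X}\|)]$, note $h'(0)=0$ by the mean-zero property, verify $h''(t)\leq\lambda^2V^2h(t)$ using $r:=\|\mathbf{u}+t\mathbf{X}\|$, $|r'|\leq\|\mathbf{X}\|$, $r''=(\|\mathbf{X}\|^2-(r')^2)/r$, and $\tanh(y)\leq y$, and conclude $h(1)\leq h(0)\cosh(\lambda V)$ by ODE comparison (the ratio $h(t)/\cosh(\lambda Vt)$ is non-increasing). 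Your stated fallback, invoking Pinelis (1994) for precisely this kernel, is of course legitimate — it is what the paper does for the entire lemma — but the direct reproof needs the comparison argument above rather than the symmetric-extremal-configuration route.
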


Therefore, viewing $\mathbf{X}_n:=\hat{\nabla}_T{\cal L}_m\big(\bbtheta\big)-\nabla_T{\cal L}_m\big(\bbtheta\big)$, and using the bounded PG deviation in Lemma \ref{lemma_bdPG}, we can readily arrive at Lemma \ref{lemma.PGwhp}.

\subsection{Proof of Lemma \ref{lemma-smooth}}\label{pf.lemma3}
For notational brevity, we use ${\cal T}\sim\bbtheta_1$ for ${\cal T}\sim\mathbb{P}(\cdot|\bbtheta_1)$, and $\bbpi(\mathbf{a}_t;\bbtheta)$ for $\bbpi(\mathbf{a}_t|\mathbf{s}_t;\bbtheta)$ in this proof.
	For any $\bbtheta_1, \bbtheta_2\in \mathbb{R}^d$, it follows that 
\begin{align}\label{eq.smooth-1}
	& \small\|\nabla {\cal L}_m(\bbtheta_1)-\nabla{\cal L}_m(\bbtheta_2)\|\nonumber\\
\!\!=& \small\Bigg\|\mathbb{E}_{{\cal T}\sim\bbtheta_1}\left[\sum_{t=0}^{\infty}\left(\sum_{\tau=0}^t\nabla\log\bbpi(\mathbf{a}_{\tau};\bbtheta_1)\right)\gamma^t\ell_m(\mathbf{s}_t,\mathbf{a}_t)\right]\nonumber\\
\!\!	-& \small\mathbb{E}_{{\cal T}\sim\bbtheta_2}\left[\sum_{t=0}^{\infty}\left(\sum_{\tau=0}^t\nabla\log\bbpi(\mathbf{a}_{\tau};\bbtheta_1)\right)\gamma^t\ell_m(\mathbf{s}_t,\mathbf{a}_t)\right]\nonumber\\
\!\!    +& \small\mathbb{E}_{{\cal T}\sim\bbtheta_2}\left[\sum_{t=0}^{\infty}\left(\sum_{\tau=0}^t\nabla\log\bbpi(\mathbf{a}_{\tau};\bbtheta_1)\right)\gamma^t\ell_m(\mathbf{s}_t,\mathbf{a}_t)\right]\nonumber\\
\!\!	-& \small\mathbb{E}_{{\cal T}\sim\bbtheta_2}\!\left[\sum_{t=0}^{\infty}\left(\sum_{\tau=0}^t\nabla\log\bbpi(\mathbf{a}_{\tau};\bbtheta_2)\right)\gamma^t\ell_m(\mathbf{s}_t,\mathbf{a}_t)\right]\!\Bigg\|.
\end{align}

We can bound the first difference term in \eqref{eq.smooth-1} as 
\begin{align}\label{eq.smooth-2}
&\small \Bigg\|\mathbb{E}_{{\cal T}\sim\bbtheta_1}\!\!\left[\sum_{t=0}^{\infty}\left(\sum_{\tau=0}^t\nabla\log\bbpi(\mathbf{a}_{\tau};\bbtheta_1)\right)\!\gamma^t\ell_m(\mathbf{s}_t,\mathbf{a}_t)\right]\nonumber\\
&\small~~-\mathbb{E}_{{\cal T}\sim\bbtheta_2}\!\!\left[\sum_{t=0}^{\infty}\left(\sum_{\tau=0}^t\nabla\log\bbpi(\mathbf{a}_{\tau};\bbtheta_1)\!\right)\gamma^t\ell_m(\mathbf{s}_t,\mathbf{a}_t)\right]\!\Bigg\|\nonumber\\
\!\!= &\small \Bigg\|\int\mathbb{P}({\cal T}|\bbtheta_1)\sum_{t=0}^{\infty}\left(\sum_{\tau=0}^t\nabla\log\bbpi(\mathbf{a}_{\tau};\bbtheta_1)\right)\!\gamma^t\ell_m(\mathbf{s}_t,\mathbf{a}_t)\nonumber\\
&~~\small-\mathbb{P}({\cal T}|\bbtheta_2)\sum_{t=0}^{\infty}\left(\sum_{\tau=0}^t\nabla\log\bbpi(\mathbf{a}_{\tau};\bbtheta_1)\!\right)\!\gamma^t\ell_m(\mathbf{s}_t,\mathbf{a}_t)\mathbf{d}{\cal T}\Bigg\|\nonumber\\
\!\!= &\small \Bigg\|\sum_{t=0}^{\infty}\int\mathbb{P}({\cal T}_t|\bbtheta_1)\left(\sum_{\tau=0}^t\nabla\log\bbpi(\mathbf{a}_{\tau};\bbtheta_1)\right)\!\gamma^t\ell_m(\mathbf{s}_t,\mathbf{a}_t)\nonumber\\
&~~~~~~\small-\mathbb{P}({\cal T}_t|\bbtheta_2)\left(\sum_{\tau=0}^t\nabla\log\bbpi(\mathbf{a}_{\tau};\bbtheta_1)\!\right)\!\gamma^t\ell_m(\mathbf{s}_t,\mathbf{a}_t)\mathbf{d}{\cal T}_t\Bigg\|\nonumber\\
\!\!\leq &\small\sum_{t=0}^{\infty}\!\int\!\Big\|\!\big(\mathbb{P}({\cal T}_t|\bbtheta_1)\!-\!\mathbb{P}({\cal T}_t|\bbtheta_2)\big)\!\Big(\!\sum_{\tau=0}^t\!\nabla\log\bbpi(\mathbf{a}_{\tau};\bbtheta_1)\!\Big)\gamma^t\ell_m(\mathbf{s}_t,\mathbf{a}_t)\Big\|\mathbf{d}{\cal T}_t\nonumber\\
\!\!\leq &\small  \sum_{t=0}^{\infty}\!\int\!\Big|\mathbb{P}({\cal T}_t|\bbtheta_1)\!-\!\mathbb{P}({\cal T}_t|\bbtheta_2)\Big|\Big\|\!\Big(\sum_{\tau=0}^t\nabla\log\bbpi(\mathbf{a}_{\tau};\bbtheta_1)\!\Big)\!\gamma^t\ell_m(\mathbf{s}_t,\mathbf{a}_t)\Big\|\mathbf{d}{\cal T}_t
%\leq &\int \frac{G\bar{\ell}_m\gamma}{(1-\gamma)^2}\Big|\mathbb{P}({\cal T}|\bbtheta_1)-\mathbb{P}({\cal T}|\bbtheta_2)\Big|\mathbf{d}{\cal T}.
\end{align}
where we use ${\cal T}_t$ to denote a $t$-slot trajectory $\{\mathbf{s}_0,\mathbf{a}_0, \mathbf{s}_1,\mathbf{a}_1,\cdots,\mathbf{s}_{t-1},\mathbf{a}_{t-1}\}$.

For the remaining difference term in \eqref{eq.smooth-2}, we bound it as
 \begin{align}\label{eq.smooth-3}
  	& \small\Big|\mathbb{P}({\cal T}_t|\bbtheta_1)-\mathbb{P}({\cal T}_t|\bbtheta_2)\Big|\nonumber\\
 	=&\small\Big|\rho(\mathbf{s}_0)\prod_{\nu=0}^{t-1} \bbpi(\mathbf{a}_{\nu};\bbtheta_1)\mathbb{P}(\mathbf{s}_{\nu+1}|\mathbf{s}_{\nu},\mathbf{a}_{\nu})\nonumber\\
 	&\small\qquad\qquad\qquad\qquad\quad~-\rho(\mathbf{s}_0)\prod_{\nu=0}^{t-1} \bbpi(\mathbf{a}_{\nu};\bbtheta_2)\mathbb{P}(\mathbf{s}_{\nu+1}|\mathbf{s}_{\nu},\mathbf{a}_{\nu})\Big|\nonumber\\
 	=&\small\rho(\mathbf{s}_0)\prod_{\nu=0}^{t-1}\mathbb{P}(\mathbf{s}_{\nu+1}|\mathbf{s}_{\nu},\mathbf{a}_{\nu})\Big|\prod_{\nu=0}^{t-1} \bbpi(\mathbf{a}_{\nu};\bbtheta_1)-\prod_{\nu=0}^{t-1} \bbpi(\mathbf{a}_{\nu};\bbtheta_2)\Big|\nonumber\\
% 	=&\small\rho(\mathbf{s}_0)\prod_{\nu=0}^{t-1}\mathbb{P}(\mathbf{s}_{\nu+1}|\mathbf{s}_{\nu},\mathbf{a}_{\nu})\Big|\prod_{t=0}^{t-1} \bbpi(\mathbf{a}_{\nu};\bbtheta_1)-\prod_{\nu=0}^{t-1}\bbpi(\mathbf{a}_{\nu};\bbtheta_2)\Big|\nonumber\\
 	\stackrel{(a)}{=}&\small\rho(\mathbf{s}_0)\prod_{\nu=0}^{t-1}\mathbb{P}(\mathbf{s}_{\nu+1}|\mathbf{s}_{\nu},\mathbf{a}_{\nu})\Big|(\bbtheta_1-\bbtheta_2)^{\top}\nabla\prod_{\nu=0}^{t-1}\bbpi(\mathbf{a}_{\nu};\tilde{\bbtheta})\Big|
 \end{align}
 where (a) uses the mean-value theorem and $\tilde{\bbtheta}:=(1-c)\bbtheta_1+c\bbtheta_2$ with a certain constant $c\in[0, 1]$. 
 
 In addition, note that we have
 \begin{align}\label{eq.smooth-4}
 & \small\Big|(\bbtheta_1-\bbtheta_2)^{\top}\nabla\prod_{\nu=0}^{t-1}\bbpi(\mathbf{a}_{\nu};\tilde{\bbtheta})\Big|\nonumber\\
 	=	&\small\Big|(\bbtheta_1-\bbtheta_2)^{\top}\nabla\log\prod_{\nu=0}^{t-1}\bbpi(\mathbf{a}_{\nu};\tilde{\bbtheta})\prod_{\nu=0}^{t-1}\bbpi(\mathbf{a}_{\nu};\tilde{\bbtheta})\Big|\nonumber\\
 	=&\small\prod_{\nu=0}^{t-1}\bbpi(\mathbf{a}_{\nu};\tilde{\bbtheta})\Big|(\bbtheta_1-\bbtheta_2)^{\top}\nabla\log\prod_{\nu=0}^{t-1}\bbpi(\mathbf{a}_{\nu};\tilde{\bbtheta})\Big|\nonumber\\
 	\leq &\small\prod_{\nu=0}^{t-1}\bbpi(\mathbf{a}_{\nu};\tilde{\bbtheta})\Big\|\sum_{\nu=0}^{t-1}\nabla\log\bbpi(\mathbf{a}_{\nu};\tilde{\bbtheta})\Big\|\Big\|\bbtheta_1-\bbtheta_2\Big\|\nonumber\\
 	\leq &\small \,tG\Big\|\bbtheta_1-\bbtheta_2\Big\|
\prod_{\nu=0}^{t-1}\bbpi(\mathbf{a}_{\nu};\tilde{\bbtheta}) 
\end{align}
and if plugging \eqref{eq.smooth-3} and \eqref{eq.smooth-4} into \eqref{eq.smooth-2}, it follows that
 \begin{align}\label{eq.smooth-5}
 & \small \sum_{t=0}^{\infty}\!\int\!\Big|\mathbb{P}({\cal T}_t|\bbtheta_1)-\mathbb{P}({\cal T}_t|\bbtheta_2)\Big|\Big\|\sum_{\tau=0}^t\nabla\log\bbpi(\mathbf{a}_{\tau};\bbtheta_1)\!\gamma^t\ell_m(\mathbf{s}_t,\mathbf{a}_t)\Big\|\mathbf{d}{\cal T}_t\nonumber\\
& \leq  \small \sum_{t=0}^{\infty}\int\rho(\mathbf{s}_0)\prod_{\nu=0}^{t-1}\mathbb{P}(\mathbf{s}_{\nu+1}|\mathbf{s}_{\nu},\mathbf{a}_{\nu})\bbpi(\mathbf{a}_{\nu};\tilde{\bbtheta}) tG\Big\|\bbtheta_1-\bbtheta_2\Big\|\nonumber\\
& \small \qquad\qquad\qquad\quad\times \Bigg\|\left(\sum_{\tau=0}^t\nabla\log\bbpi(\mathbf{a}_{\tau};\bbtheta_1)\right)\!\gamma^t\ell_m(\mathbf{s}_t,\mathbf{a}_t)\Bigg\|\mathbf{d}{\cal T}_t\nonumber\\
&\leq \small \sum_{t=0}^{\infty}\int	\mathbb{P}({\cal T}_t|\tilde{\bbtheta})tG\Big\|\bbtheta_1-\bbtheta_2\Big\| tG\gamma^t\bar{\ell}_m\mathbf{d}{\cal T}_t\nonumber\\
&= \small\sum_{t=0}^{\infty}	t^2G^2\gamma^t\bar{\ell}_m\Big\|\bbtheta_1-\bbtheta_2\Big\| \nonumber\\
&= \small\left(\frac{\gamma}{(1-\gamma)^2}+\frac{2\gamma^2}{(1-\gamma)^3}\right)G^2\bar{\ell}_m\Big\|\bbtheta_1-\bbtheta_2\Big\|
 \end{align}
 where we use the equation that $\sum_{t=0}^{\infty}t^2\gamma^t=\frac{\gamma}{(1-\gamma)^2}+\frac{2\gamma^2}{(1-\gamma)^3}$.
% \begin{equation}
% 		\sum_{t=0}^{\infty}t^2\gamma^t=\frac{\gamma}{(1-\gamma)^2}+\frac{2\gamma^2}{(1-\gamma)^3}.
% \end{equation}

We can separately bound the second difference term in \eqref{eq.smooth-1} as
\begin{align}\label{eq.smooth-6}
&~~\small\Bigg\|\mathbb{E}_{{\cal T}\sim\bbtheta_2}\!\!\left[\sum_{t=0}^{\infty}\left(\sum_{\tau=0}^t\nabla\log\bbpi(\mathbf{a}_t;\bbtheta_1)\right)\!\gamma^t\ell_m(\mathbf{s}_t,\mathbf{a}_t)\right]\nonumber\\
& \small~~~~~-\mathbb{E}_{{\cal T}\sim\bbtheta_2}\!\!\left[\sum_{t=0}^{\infty}\left(\sum_{\tau=0}^t\nabla\log\bbpi(\mathbf{a}_t;\bbtheta_2)\!\right)\gamma^t\ell_m(\mathbf{s}_t,\mathbf{a}_t)\right]\!\Bigg\|\nonumber\\
\!\!\leq & \small\int\mathbb{P}({\cal T}|\bbtheta_2)\Bigg\|\sum_{t=0}^{\infty}\left(\sum_{\tau=0}^t\nabla\log\bbpi(\mathbf{a}_t;\bbtheta_1)\right)\!\gamma^t\ell_m(\mathbf{s}_t,\mathbf{a}_t)\nonumber\\
&\small\qquad\qquad\qquad-\sum_{t=0}^{\infty}\left(\sum_{\tau=0}^t\nabla\log\bbpi(\mathbf{a}_t;\bbtheta_2)\!\right)\!\gamma^t\ell_m(\mathbf{s}_t,\mathbf{a}_t)\Bigg\|\mathbf{d}{\cal T}\nonumber\\
\!\!\leq & \small\!\int\!\mathbb{P}({\cal T}|\bbtheta_2)\!\sum_{t=0}^{\infty}\!\gamma^t\ell_m(\mathbf{s}_t,\mathbf{a}_t)\nonumber\\
&\qquad\qquad\quad\times\sum_{\tau=0}^t\!\Big\|\nabla\log\bbpi(\mathbf{a}_t;\bbtheta_1)\!-\!\nabla\log\bbpi(\mathbf{a}_t;\bbtheta_2)\Big\|\mathbf{d}{\cal T}\nonumber\\
\!\!\leq & \small \int\mathbb{P}({\cal T}|\bbtheta_2)\sum_{t=0}^{\infty}\gamma^t\bar{\ell}_m\sum_{\tau=0}^tF\big\|\bbtheta_1-\bbtheta_2\big\|\mathbf{d}{\cal T}\nonumber\\
\!\!\leq & \small \sum_{t=0}^{\infty}\gamma^t\bar{\ell}_m tF\big\|\bbtheta_1-\bbtheta_2\big\|=\frac{F\bar{\ell}_m\gamma}{(1-\gamma)^2}\big\|\bbtheta_1-\bbtheta_2\big\|.
\end{align}
Combining \eqref{eq.smooth-5} and \eqref{eq.smooth-6}, we have that
\begin{align}
&\small\|\nabla {\cal L}_m(\bbtheta_1)-\nabla{\cal L}_m(\bbtheta_2)\|\nonumber\\
\leq&\small\left(\frac{F}{(1-\gamma)^2}+\left(\frac{1}{(1-\gamma)^2}+\frac{2\gamma}{(1-\gamma)^3}\right)G^2\right)\gamma\bar{\ell}_m\big\|\bbtheta_1-\bbtheta_2\big\|\nonumber\\
:=&\small L_m\big\|\bbtheta_1-\bbtheta_2\big\|.
\end{align}
Similarly, we can bound the Lipschitz constant of $\nabla {\cal L}(\bbtheta)$, $\nabla_T {\cal L}(\bbtheta)$, $\nabla_T{\cal L}_m(\bbtheta)$, and the proof is complete.

\subsection{Proof of Theorem \ref{theorem1}}\label{pf.theorem1}
The subsequent analysis critically builds on the following Lyapunov function:
 \begin{equation}\label{eq.Lyap}
 \small
 	\mathbb{V}^k:={\cal L}(\bbtheta^k)-{\cal L}(\bbtheta^*)+ \frac{3}{2\alpha}\sum_{d=1}^D\sum_{\tau=d}^D \xi\left\|\bbtheta^{k+1-d}-\bbtheta^{k-d}\right\|^2
 \end{equation}
where $\bbtheta^*$ is the minimizer of \eqref{opt0}, and $\alpha, \xi$ are the stepsize and the threshold constant in \eqref{eq.trig-cond}.
Using the definition of $\mathbb{V}^k$ in \eqref{eq.Lyap}, it follows that (with $\beta_d:=\frac{3}{2\alpha}\sum_{\tau=d}^D \xi$)
\begin{align}\label{eq.pf-lemma2-1}
&\mathbb{V}^{k+1}-\mathbb{V}^k\nonumber\\
=&\small{\cal L}(\bbtheta^{k+1})-{\cal L}(\bbtheta^k)+\sum_{d=1}^D \beta_d\left\|\bbtheta^{k+2-d}-\bbtheta^{k+1-d}\right\|^2\nonumber\\
-&\small\sum_{d=1}^D \beta_d\left\|\bbtheta^{k+1-d}-\bbtheta^{k-d}\right\|^2\nonumber\\
\!\!\stackrel{(a)}{\leq}&\small\!-\!\frac{\alpha}{2}\left\|\nabla{\cal L}(\bbtheta^k)\right\|^2\!\!+\!\frac{3\alpha}{2}\Bigg\|\!\sum_{m\in{\cal M}^k_c}\!\!\!\delta\hat{\nabla}^k_m\Bigg\|^2\!\!+\!\frac{3\alpha}{2}\Big\|\nabla_T {\cal L}(\bbtheta^k)\!-\!\nabla{\cal L}(\bbtheta^k)\Big\|^2\nonumber\\
\!\!+&\small\frac{3\alpha}{2}\Big\|\hat{\nabla}_{N,T}{\cal L}\big(\bbtheta^k\big)\!-\!\nabla_T {\cal L}(\bbtheta^k)\Big\|^2+\left(\!\frac{L}{2}-\frac{1}{2\alpha}+\beta_1\!\right)\!\left\|\bbtheta^{k+1}\!-\!\bbtheta^k\right\|^2\!\nonumber\\
\!\!+&\small\sum_{d=2}^D \beta_d\left\|\bbtheta^{k+2-d}\!-\!\bbtheta^{k+1-d}\right\|^2\!-\!\sum_{d=1}^D \beta_d\left\|\bbtheta^{k+1-d}\!-\!\bbtheta^{k-d}\right\|^2\!\!\!
\end{align}
where (a) uses \eqref{eq.lemma1} in Lemma \ref{lemma1}.

Using $(\sum_{n=1}^N a_n)^2\leq N\sum_{n=1}^N a_n^2$, it follows that
	\begin{align}\label{eq.pf-lemma2-4}
\Bigg\|\sum_{m\in{\cal M}^k_c}\delta\hat{\nabla}^k_m\Bigg\|^2\!\!=\,& \small\Bigg\|\sum_{m\in{\cal M}^k_c}\nabla{\cal L}_m\big(\hat{\bbtheta}_m^k\big)-\nabla{\cal L}_m\big(\bbtheta^k\big)\Bigg\|^2\nonumber\\
\leq & \small\left|{\cal M}^k_c\right|\sum_{m\in{\cal M}^k_c}\Big\|\nabla{\cal L}_m\big(\hat{\bbtheta}_m^k\big)-\nabla{\cal L}_m\big(\bbtheta^k\big)\Big\|^2\nonumber\\
\stackrel{(b)}{\leq}& \small\frac{|{\cal M}^k_c|^2}{\alpha^2M^2}D \xi\left\|\bbtheta^{k+1-d}-\bbtheta^{k-d}\right\|^2+6\sigma^2_{N,\delta/K}
\end{align}
where (b) uses the communication trigger condition \eqref{eq.trig-cond}, and the fact that $\sigma^2_{N,\delta/K}:=M\sum_{m\in{\cal M}}\sigma^2_{m,N,\delta/K}$.

Plugging \eqref{eq.pf-lemma2-4} into \eqref{eq.pf-lemma2-1}, we have (for convenience, define $\beta_{D+1}=0$ in the analysis)
\begin{align}\label{eq.pf-lemma2-5}
&\small\mathbb{V}^{k+1}-\mathbb{V}^k \nonumber\\
\!\!\leq &\small\!-\!\frac{\alpha}{2}\left\|\nabla{\cal L}(\bbtheta^k)\right\|^2\!\!+\!\sum_{d=1}^D\Big(\frac{3 \xi\left|{\cal M}^k_c\right|^2}{2\alpha M^2}-\beta_d+\beta_{d+1}\!\Big)\!\left\|\bbtheta^{k+1-d}\!\!-\!\bbtheta^{k-d}\right\|^2\nonumber\\
\!\!+&\small\left(\!\frac{L}{2}-\frac{1}{2\alpha}+\beta_1\!\right)\!\left\|\bbtheta^{k+1}\!-\!\bbtheta^k\right\|^2+\frac{3\alpha}{2}\Big\|\nabla_T {\cal L}(\bbtheta^k)\!-\!\nabla{\cal L}(\bbtheta^k)\Big\|^2\nonumber\\
\!\!+&\frac{3\alpha}{2}\Big\|\hat{\nabla}_{N,T}{\cal L}\big(\bbtheta^k\big)\!-\!\nabla_T {\cal L}(\bbtheta^k)\Big\|^2+9\alpha\sigma^2_{N,\delta/K}.
\end{align}
After defining some constants to simplify the notation, the proof is then complete.

Furthermore, using $\beta_d:=\frac{3}{2\alpha}\sum_{\tau=d}^D \xi$, if the stepsize $\alpha$, and the trigger constants $\{ \xi\}$ satisfy
%$\frac{\xi\left|{\cal M}^k\right|^2}{2\alphaM^2}-\frac{1-\alpha L}{2\alpha}\leq 0$,
	\begin{align}\label{eq.beta}
\alpha \leq \frac{\big(1-3D\xi\big)}{L}
\end{align}
then it is easy to verify that expressions in the parentheses in \eqref{eq.pf-lemma2-5} are all nonpositive. Hence, we have that the descent in the Lyapunov function is bounded as 
\begin{align}\label{eq.pf-lemma2-6}
\small
\mathbb{V}^{k+1}-\mathbb{V}^k\!& \leq -\frac{\alpha}{2}\left\|\nabla{\cal L}(\bbtheta^k)\right\|^2+\frac{3\alpha}{2}\Big\|\nabla_T {\cal L}(\bbtheta^k)-\nabla{\cal L}(\bbtheta^k)\Big\|^2\nonumber\\
&+\frac{3\alpha}{2}\Big\|\hat{\nabla}_{N,T}{\cal L}\big(\bbtheta^k\big)-\nabla_T {\cal L}(\bbtheta^k)\Big\|^2+9\alpha\sigma^2_{N,\delta/K}.
\end{align}

Rearranging terms in \eqref{eq.pf-lemma2-6}, and summing up over $k=1,\cdots, K$, we have 
\begin{align}\label{eq.pf-lemma2-7}
\small\frac{1}{K}\sum_{k=1}^K\left\|\nabla{\cal L}(\bbtheta^k)\right\|^2\!\!
& \small\stackrel{(c)}{\leq} \frac{2}{\alpha K}\mathbb{V}^1\!+\!\frac{3}{K}\sum_{k=1}^K\Big\|\nabla_T {\cal L}(\bbtheta^k)\!-\!\nabla{\cal L}(\bbtheta^k)\Big\|^2\nonumber\\
& \small+\frac{3}{K}\sum_{k=1}^K\Big\|\hat{\nabla}_{N,T}{\cal L}\big(\bbtheta^k\big)\!-\!\nabla_T {\cal L}(\bbtheta^k)\Big\|^2\!\!\!+\!18\sigma^2_{N,\delta/K}\nonumber\\
& \small\stackrel{(d)}{\leq} \frac{2}{\alpha K}\mathbb{V}^1\!+3\sigma_T^2+21\sigma^2_{N,\delta/K},~~~{\rm w.p.}~1-\delta
\end{align}
where (c) omits the negative term $-\mathbb{V}^{K+1}$, and (d) follows from the finite-horizon truncation error in Lemma \ref{lemma-finite}, and the gradient concentration result in Lemma \ref{lemma.PGwhp} together with the union bound. 

Therefore, using Lemmas \ref{lemma.PGwhp} and \ref{lemma-finite}, it readily follows that there exist $T={\cal O}(\log(1/\epsilon))$, $K={\cal O}(1/\epsilon)$, and $N={\cal O} \left(\log(K/\delta)/\epsilon\right)$ such that
\begin{equation}\label{eq.pf-lemma2-8}
\small
\frac{1}{K}\sum_{k=1}^K\left\|\nabla{\cal L}(\bbtheta^k)\right\|^2
\leq\frac{2}{\alpha K}\mathbb{V}^1\!+3\sigma_T^2+21\sigma^2_{N,\delta/K}\leq \epsilon,~~~{\rm w.p.}~1-\delta
\end{equation} 
from which the proof is complete.

\subsection{Proof of Lemma \ref{lemma4}}\label{pf.lemma4}
The idea is essentially to show that if \eqref{eq.lemma4} holds, then the learner $m$ will not violate the LAPG conditions in \eqref{eq.trig-cond} so that it does not upload, if it has uploaded \emph{at least} once during the last $d$ iterations.
%If this can be shown, then the worker $m$ communicates with PS at most every other $d$ iterations, and the lemma follows.

To prove this argument, for the difference of two policy gradient evaluations, we have that 
\begin{align}\label{eq.lemma4-1}
&\small\Big\|\hat{\nabla}_{N,T} {\cal L}_m(\hat{\bbtheta}_m^{k-1})-\hat{\nabla}_{N,T} {\cal L}_m(\bbtheta^k)\Big\|^2\nonumber\\
=&\small\Big\|\hat{\nabla}_{N,T} {\cal L}_m(\hat{\bbtheta}_m^{k-1})-\nabla_T {\cal L}_m(\hat{\bbtheta}_m^{k-1})\nonumber\\
&\small+\nabla_T{\cal L}_m(\hat{\bbtheta}_m^{k-1})-\nabla_T {\cal L}_m(\bbtheta^k)+\nabla_T {\cal L}_m(\bbtheta^k)-\hat{\nabla}_{N,T} {\cal L}_m(\bbtheta^k)\Big\|^2\nonumber\\
\stackrel{(a)}{\leq} &\small3\Big\|\hat{\nabla}_{N,T} {\cal L}_m(\hat{\bbtheta}_m^{k-1})\!-\!\nabla_T {\cal L}_m(\hat{\bbtheta}_m^{k-1})\Big\|^2\nonumber\\
\small+3&\small\Big\|\nabla_T{\cal L}_m(\hat{\bbtheta}_m^{k-1})\!-\!\nabla_T {\cal L}_m(\bbtheta^k)\Big\|^2\!\!\!+\!3\Big\|\nabla_T {\cal L}_m(\bbtheta^k)\!-\!\hat{\nabla}_{N,T} {\cal L}_m(\bbtheta^k)\Big\|^2\nonumber\\
\stackrel{(b)}{\leq} &\small 6\sigma^2_{m,N,\delta}+3\Big\|\nabla_T{\cal L}_m(\hat{\bbtheta}_m^{k-1})\!-\!\nabla_T {\cal L}_m(\bbtheta^k)\Big\|^2,\,~~~{\rm w.p.}~1-2\delta/K\nonumber\\
\stackrel{(c)}{\leq} &\small 6\sigma^2_{m,N,\delta}+3L_m^2\left\|\hat{\bbtheta}_m^{k-1}-\bbtheta^k\right\|^2,\,~~~{\rm w.p.}~1-2\delta/K
\end{align}
where (a) uses $\|\mathbf{a}+\mathbf{b}+\mathbf{c}\|^2\leq 3\|\mathbf{a}\|^2+3\|\mathbf{b}\|^2+3\|\mathbf{c}\|^2$; (b) uses Lemma \ref{lemma.PGwhp} twice; and (c) follows from the smoothness property in Lemma \ref{lemma-smooth}.

Furthermore, suppose that at iteration $k$, the most recent iteration that the learner $m$ did communicate with the controller is iteration $k-d'$ with $1\leq d'\leq d$.
Thus, we have $\hat{\bbtheta}_m^{k-1}=\bbtheta^{k-d'}$, which implies that
\begin{align}\label{eq.lemma4-2}
	&\small6\sigma^2_{m,N,\delta}+3L_m^2\left\|\hat{\bbtheta}_m^{k-1}-\bbtheta^k\right\|^2\nonumber\\
	= \, &\small6\sigma^2_{m,N,\delta}+3L_m^2\left\|\bbtheta^{k-d'}-\bbtheta^k\right\|^2\nonumber\\
\leq \, &\small6\sigma^2_{m,N,\delta}+3d'L^2\mathds{H}(m)\sum_{b=1}^{d'}\left\|\bbtheta^{k+1-b}-\bbtheta^{k-b}\right\|^2\nonumber\\
\stackrel{(d)}{\leq}\,& \small6\sigma^2_{m,N,\delta}+\frac{ \xi}{\alpha^2M^2}\sum_{b=1}^{d'}\left\|\bbtheta^{k+1-b}-\bbtheta^{k-b}\right\|^2\nonumber\\
\stackrel{(e)}{\leq}\, &\small 6\sigma^2_{m,N,\delta}+ \frac{\sum_{b=1}^D\xi_b\left\|\bbtheta^{k+1-b}-\bbtheta^{k-b}\right\|^2}{\alpha^2M^2}
\end{align}
where (d) follows since the condition \eqref{eq.lemma4} is satisfied, so that
\begin{equation}
	\mathds{H}(m)\leq \frac{ \xi}{3d\alpha^2 L^2 M^2}\leq  \frac{ \xi}{3d'\alpha^2 L^2 M^2}
\end{equation}
and (e) follows from our choice of $\{ \xi\}$ such that for $1\leq d'\leq d$, we have $ \xi\leq \xi_{d'}\leq \ldots \leq \xi_1$ and $\|\bbtheta^{k+1-b}-\bbtheta^{k-b}\|^2\geq 0$.
Since \eqref{eq.lemma4-2} is exactly the RHS of \eqref{eq.trig-cond}, the trigger condition \eqref{eq.trig-cond} will not be activated, and the learner $m$ does not communicate with the controller at iteration $k$. %With an additional step that $\|\nabla {\cal L}_m(\hat{\bbtheta}_m^{k-1})-\nabla {\cal L}_m(\bbtheta^k)\|^2\leq L_m^2\|\hat{\bbtheta}_m^{k-1}-\bbtheta^k\|^2$, we can also prove that if $\hat{\bbtheta}_m^{k-1}=\bbtheta^{k-d'}$, the trigger condition \eqref{eq.trig-cond} does not activate either.

Note that the above argument holds for any $1\leq d'\leq d$, and thus if \eqref{eq.lemma4} holds, the learner $m$ communicates with the controller at most every other $d$ iterations. Since \eqref{eq.lemma4-1} holds with probability $1-2\delta/K$, by using union bound, this argument holds with probability $1-2\delta$ for all $k\in\{1,\cdots, K\}$.

\subsection{Proof of Theorem \ref{theorem2}}\label{pf.theorem2}
Recalling the Lyapunov function \eqref{eq.Lyap}, we have
\begin{equation}
\small
\mathbb{V}^k:={\cal L}(\bbtheta^k)-{\cal L}(\bbtheta^*)+\sum_{d=1}^{D}\frac{3\sum_{j=d}^D\xi_j}{2\alpha}\left\|\bbtheta^{k+1-d}-\bbtheta^{k-d}\right\|^2
\end{equation}
Using \eqref{eq.pf-lemma2-6} in the proof of Theorem \ref{theorem1}, and choosing the stepsize as $\alpha=\frac{1}{L}\big(1-3D \xi\big)$, we have
\begin{align}
\small\mathbb{V}^{k+1}-\mathbb{V}^k&\small\leq -\frac{\alpha}{2}\left\|\nabla{\cal L}(\bbtheta^k)\right\|^2+\frac{3\alpha}{2}\Big\|\nabla_T {\cal L}(\bbtheta^k)-\nabla{\cal L}(\bbtheta^k)\Big\|^2\nonumber\\
& \small+\frac{3\alpha}{2}\Big\|\hat{\nabla}_{N,T}{\cal L}\big(\bbtheta^k\big)-\nabla_T {\cal L}(\bbtheta^k)\Big\|^2+9\alpha\sigma^2_{N,\delta/K}.
\end{align}
Summing up both sides from $k=1,\ldots,K$, and initializing $\bbtheta^{1-D}=\cdots=\bbtheta^0=\bbtheta^1$, we have
%\begin{equation}
%	\sum_{k=1}^K \left\|\nabla{\cal L}(\bbtheta^k)\right\|^2\leq \frac{2}{\alpha} \mathbb{V}^1=\frac{2}{\alpha} ({\cal L}(\bbtheta^1)-{\cal L}(\bbtheta^*))=\frac{2 L}{1-D \xi} ({\cal L}(\bbtheta^1)-{\cal L}(\bbtheta^*))
%\end{equation}
%which implies that
\begin{align}\label{eq.comp1}
\!\!\!\small\frac{1}{K}\sum_{k=1}^K	\left\|\nabla{\cal L}(\bbtheta^k)\right\|^2\!\!&\small\!\leq\!\frac{2 L\left[{\cal L}(\bbtheta^1)-{\cal L}(\bbtheta^*)\right]}{(1-3D \xi)K} \!+\!3\Big\|\nabla_T {\cal L}(\bbtheta^k)\!-\!\nabla{\cal L}(\bbtheta^k)\Big\|^2\nonumber\\
& \small+\!3\Big\|\hat{\nabla}_{N,T}{\cal L}\big(\bbtheta^k\big)\!-\!\nabla_T {\cal L}(\bbtheta^k)\Big\|^2\!\!+\!18\sigma^2_{N,\delta/K}\nonumber\\
& \small\leq \frac{2 L\left[{\cal L}(\bbtheta^1)-{\cal L}(\bbtheta^*)\right]}{(1-3D \xi)K}+3\sigma_T^2+21\sigma^2_{N,\delta/K}
\end{align}
where the last inequality holds w.p. $1-\delta$ by using the gradient concentration result in Lemma \ref{lemma.PGwhp}.

 With regard to PG, following the same line of analysis, it can guarantee that 
\begin{align}\label{eq.comp2}
&\small\frac{1}{K}\sum_{k=1}^K\left\|\nabla{\cal L}(\bbtheta^k)\right\|^2\nonumber\\
\leq &\small\frac{2 L}{K} \left[{\cal L}(\bbtheta^1)-{\cal L}(\bbtheta^*)\right]+3\Big\|\nabla_T {\cal L}(\bbtheta^k)\!-\!\nabla{\cal L}(\bbtheta^k)\Big\|^2\nonumber\\
&\small+3\Big\|\hat{\nabla}_{N,T}{\cal L}\big(\bbtheta^k\big)\!-\!\nabla_T {\cal L}(\bbtheta^k)\Big\|^2\nonumber\\
\leq &\small\frac{2 L}{K} \left[{\cal L}(\bbtheta^1)-{\cal L}(\bbtheta^*)\right]+3\sigma_T^2+3\sigma^2_{N,\delta/K},~{\rm w.p.}~1-\delta.
\end{align}
If $T$ and $N$ are chosen large enough (cf. \eqref{eq.theorem1-0}), so the first terms in the RHS of \eqref{eq.comp1} and \eqref{eq.comp2} each dominates the corresponding remaining two error terms. 
Therefore, to achieve the same $\epsilon$-gradient error, with probability $1-2\delta$, the number of needed iterations under LAPG is $(1-3D \xi)^{-1}$ times that of PG.

Regarding the number of needed communication rounds, similar to the derivations in \cite[Proposition 1]{chen2018nips}, we can use Lemma \ref{lemma4} to show that the LAPG's average communication rounds per iteration is $(1-\Delta\bar{\mathbb{C}}(h;\{\gamma_d\}))$ times that of PG with probability $1-2\delta$. Together with the number of needed iterations above, we arrive at \eqref{prononconvex-r1} with probability $1-4\delta$.

%$\xi_1=\xi_2=\ldots= \xi=\xi$, then $\alpha=\frac{1-3D\xi}{L}$, and $\gamma_d=\frac{\xi/d}{\alpha^2 L^2M^2}$, $d=1,\ldots,D$.
As $h(\cdot)$ is non-decreasing, for a given $\gamma'$, if $\gamma_D\geq \gamma'$, it readily follows that $h(\gamma_D)\geq h(\gamma')$.
Together with the definition of $\Delta\bar{\mathbb{C}}(h;\{\gamma_d\})$ in \eqref{eq.prop5}, we arrive at 
\begin{align}
\Delta\bar{\mathbb{C}}(h;\{\gamma_d\})&\small=\sum_{d=1}^D\left(\frac{1}{d}-\frac{1}{d+1}\right)h\left(\gamma_d\right)\nonumber\\
&\small\geq\sum_{d=1}^D\left(\frac{1}{d}-\frac{1}{d+1}\right)h\left(\gamma_D\right)\geq\frac{D}{D+1}h(\gamma').
\end{align}
Therefore, if we choose the parameters as 
\begin{equation}
\alpha=\frac{1-3D\xi}{L},~~\xi_d=\xi,~~\gamma_d=\frac{\xi/d}{3\alpha^2 L^2M^2},\, d\in[1,D]
\end{equation}
the total communication is reduced if the following relation is satisfied (cf. \eqref{prononconvex-r1})
\begin{align}\label{eq.67}
\frac{\mathbb{C}_{\rm LAPG}(\epsilon)}{\mathbb{C}_{\rm PG}(\epsilon)}=\Big(1-\frac{D}{D+1}h(\gamma')\Big)\cdot\frac{1}{1- 3D\xi}<1.
\end{align}
Clearly, \eqref{eq.67} holds if we have $h(\gamma')>3(D+1)\xi$.
On the other hand, the condition $\gamma_D\geq \gamma'$ requires
\begin{align}\label{dayu}
\xi/D\geq \gamma'(1-D\xi)^2M^2.
\end{align}
Clearly, if $\xi>\gamma' D M^2$, then \eqref{dayu} holds. In all, if we have
\begin{align}\label{xiao}
\gamma'<\frac{\xi}{D M^2}<\frac{h(\gamma')}{3(D+1)D M^2}
\end{align}
then the inequality $\mathbb{C}_{\rm LAPG}(\epsilon)\leq \mathbb{C}_{\rm PG}(\epsilon)$ in Theorem \ref{theorem2} holds with probability $1-4\delta$.

 \balance

\end{document}